\newtheorem{theorem}{Theorem}
\crefname{section}{Sec.}{Secs.}
\Crefname{section}{Section}{Sections}
\Crefname{table}{Table}{Tables}
\crefname{table}{Tab.}{Tabs.}
\begin{document}
	\title{Weakly-supervised Single-view Image Relighting}
	\author{
		Renjiao Yi\thanks{Co-first authors.}, Chenyang Zhu\footnotemark[1],  Kai Xu\thanks{Corresponding author: kevin.kai.xu@gmail.com. }\\
		National University of Defense Technology\\
	}
\maketitle
\begin{abstract}
	We present a learning-based approach to relight a single image of Lambertian and low-frequency specular objects. 
	Our method enables inserting objects from photographs into new scenes and relighting them under the new environment lighting, which is essential for AR applications. To relight the object, we solve both inverse rendering and re-rendering. 
	To resolve the ill-posed inverse rendering, we propose a weakly-supervised method by a low-rank constraint.  
	To facilitate the weakly-supervised training, we contribute Relit, a large-scale (750K images) dataset of videos with aligned objects under changing illuminations.
	For re-rendering, we propose a differentiable specular rendering layer to render low-frequency non-Lambertian materials under various illuminations of spherical harmonics. 
	The whole pipeline is end-to-end and efficient, allowing for a mobile app implementation of AR object insertion. Extensive evaluations demonstrate that our method achieves state-of-the-art performance. Project page: \href{https://renjiaoyi.github.io/relighting/}{https://renjiaoyi.github.io/relighting/}. 
\end{abstract}
\section{Introduction}

Object insertion finds extensive applications in Mobile AR.
Existing AR object insertions require a perfect mesh of the object being inserted. Mesh models are typically built by professionals and are not easily accessible to amateur users. Therefore, in most existing AR apps such as SnapChat and Ikea Place, users can use only built-in virtual objects for scene augmentation.
This may greatly limit user experience.
A more appealing setting is to allow the user to extract objects from a photograph and insert them into the target scene with proper lighting effects. This calls for a method of inverse rendering and relighting based on a single image, which has so far been a key challenge in the graphics and vision fields.

Relighting real objects requires recovering lighting, geometry and materials which are intertwined in the observed image; it involves solving two problems, inverse rendering~\cite{patow2003survey} and re-rendering. Furthermore, to achieve realistic results, the method needs to be applicable for non-Lambertian objects. In this paper, we propose a pipeline to solve both problems, weakly-supervised inverse rendering and non-Lambertian differentiable rendering for Lambertian and low-frequency specular objects. 

\begin{figure}
	\centering
	\includegraphics[width=\linewidth]{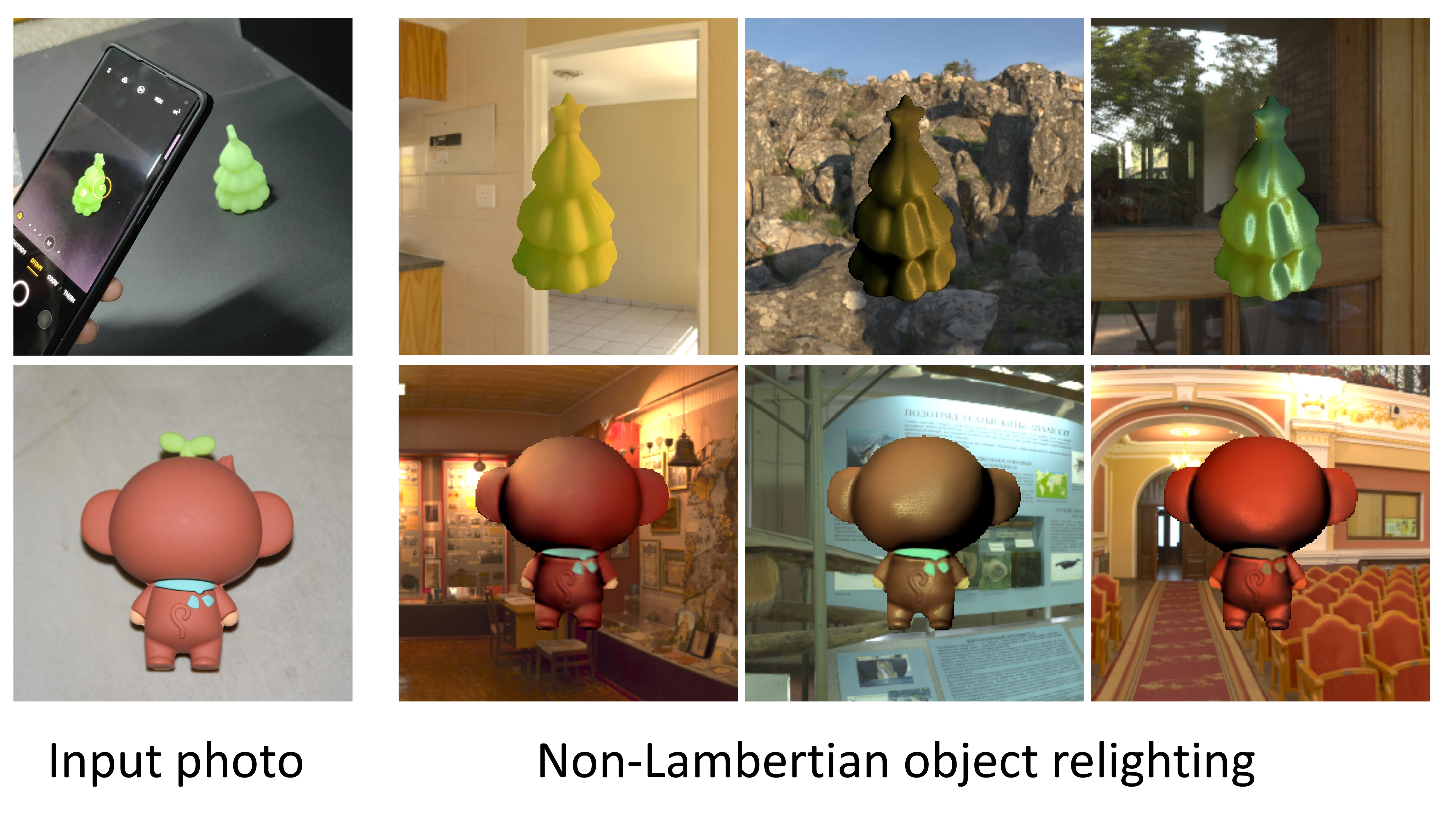}\
	\caption{Our method relights real objects into new scenes from single images, which also enables editing materials from diffuse to glossy with non-Lambertian rendering layers. }
	\label{fig:teaser}
	\vspace{-1.2em}
\end{figure}

\begin{figure*}[t]
	\centering
	\includegraphics[width=\linewidth]{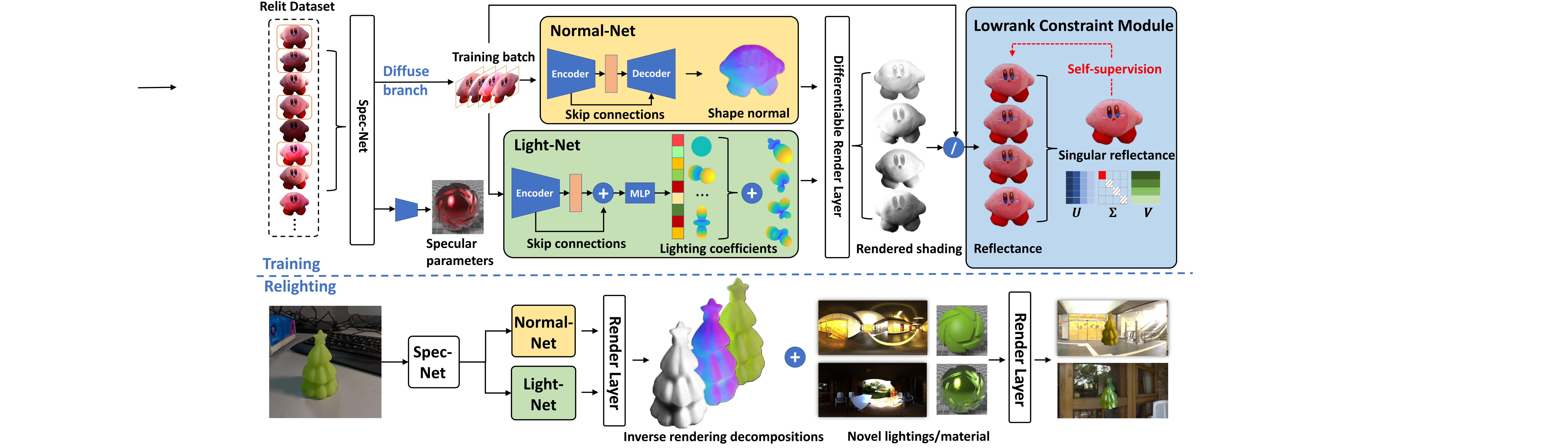}\
	\caption{Overview of our method. At training time, Spec-Net separates input images into specular and diffuse branches. Spec-Net, Normal-Net and Light-Net are trained in a self-supervised manner by the Relit dataset. At inference time, inverse rendering properties are predicted to relight the object under novel lighting and material. The non-Lambertian render layers produce realistic relit images. }
	\label{fig:pipeline}
	\vspace{-0.2cm}
\end{figure*}

Inverse rendering is a highly ill-posed problem, with several unknowns to be estimated from a single image. Deep learning methods excel at learning strong priors for reducing ill-posedness. However, this comes at the cost of a large amount of labeled training data, which is especially cumbersome to prepare for inverse rendering since ground truths of large-scale real data are impossible to obtain. Synthetic training data brings the problem of domain transfer. Some methods explore self-supervised pipelines and acquire geometry supervisions of real data from 3D reconstruction by multi-view stereo (MVS)~\cite{yu2019inverserendernet,yu2020self}. Such approaches, however, have difficulties in handling textureless objects.

To tackle the challenge of training data shortage, 
we propose a \emph{weakly-supervised inverse rendering pipeline} based on a novel low-rank loss and a re-rendering loss. 
For low-rank loss, a base observation here is that the material reflectance is invariant to illumination change, as an intrinsic property of an object. We derive a low-rank loss for inverse rendering optimization which imposes that \emph{the reflectance maps of the same object under changing illuminations are linearly correlated}. 
In particular, we constrain the reflectance matrix with each row storing one of the reflectance maps to be rank one.
This is achieved by minimizing a low-rank loss defined as the Frobenius norm between the reflectance matrix and its rank-one approximation.
We prove the convergence of this low-rank loss. In contrast, traditional Euclidean losses lack a convergence guarantee.

To facilitate the learning, we contribute Relit, a large-scale dataset of videos of real-world objects with changing illuminations. We design an easy-to-deploy capturing system: a camera faces toward an object, both placed on top of a turntable. Rotating the turntable will produce a video with the foreground object staying still and the illumination changing. To extract the foreground object from the video, manual segmentation of the first frame suffices since the object is aligned across all frames.

As shown in Figure~\ref{fig:pipeline}, a fixed number of images under different lighting are randomly selected as a batch. We first devise a Spec-Net to factorize the specular highlight, trained by the low-rank loss on the chromaticity maps of diffuse images (image subtracts highlight) which should be consistent within the batch. With the factorized highlight, we further predict the shininess and specular reflectance, which is self-supervised with the re-rendering loss of specular highlight. For the diffuse branch, we design two networks, Normal-Net and Light-Net, to decompose the diffuse component by predicting normal maps and spherical harmonic lighting coefficients, respectively. The diffuse shading is rendered by normal and lighting, and diffuse reflectance (albedo) is computed by diffuse image and shading. Both networks are trained by low-rank loss on diffuse reflectance. 

Regarding the re-rendering phase, 
the main difficulty is the missing of 3D information of the object given a single-view image. The Normal-Net produces a normal map which is a partial 3D representation, making the neural rendering techniques and commercial renderers inapplicable. The existing diffuse rendering layer for normal maps of~\cite{ramamoorthi2001efficient} cannot produce specular highlights. Pytorch3D and \cite{li2022phyir,li2020inverse} render specular highlights for point lights only. 

To this end,
we design a \emph{differentiable specular renderer} from normal maps, based on the Blinn-Phong specular reflection~\cite{blinn1977models} and spherical harmonic lighting~\cite{green2003spherical}. Combining with the differentiable diffuse renderer, we can render low-frequency non-Lambertian objects with prescribed parameters under various illuminations, and do material editing as byproduct.

We have developed \emph{an Android app} based on our method which allows amateur users to insert and relight arbitrary objects extracted from photographs in a target scene. Extensive evaluations on inverse rendering and image relighting demonstrate the state-of-the-art performance of our method. 
%
%
%

Our contributions include:
\begin{itemize}
	\vspace{-5pt}\item A weakly-supervised inverse rendering pipeline trained with a low-rank loss. The correctness and convergence of the loss are mathematically proven.
	\vspace{-5pt}\item A large-scale dataset of foreground-aligned videos collecting $750K$ images of $100$+ real objects under different lighting conditions.
	\vspace{-5pt}\item An Android app implementation for amateur users to make a home-run.
\end{itemize}
\section{Related Work}

{\bf{Inverse rendering.}}
As a problem of inverse graphics, inverse rendering aims to solve geometry, material and lighting from images. This problem is highly ill-posed. Thus some works tackle the problem by targeting a specific class of objects, such as faces~\cite{shu2017neural,tewari2017mofa} or planar surfaces~\cite{aittala2016reflectance}. For inverse rendering of general objects and scenes, most prior works~\cite{barron2015shape,janner2017self,li2018learning2,Lichy_2021_CVPR} require direct supervisions by synthesized data. However, networks trained on synthetic data have a domain gap for real testing images. Ground truths of real images are impossible to obtain, and it calls for self-supervised methods training on real images. Recently, self-supervised methods~\cite{yu2019inverserendernet,yu2020self} explore self-supervised inverse rendering for outdoor buildings, where the normal supervision is provided by reconstructing the geometry by MVS. However, they do not work well for general objects, which is reasonable because object images are unseen during training. However, applying the pipelines for objects meet new problems. Textureless regions on objects are challenging for MVS due to lack of features. It motivates our work on weakly-supervised inverse rendering for general objects. To fill the blank of real-image datasets on this topic, we capture a large-scale real-image datasets Relit to drive the training.  

There are also many works addressing inverse rendering as several separated problems, such as intrinsic image decomposition~\cite{shi2017learning,yi2020leveraging,liu2020unsupervised,li2018cgintrinsics}, specularity removal~\cite{shen2013real,shi2017learning,yamamoto2019general} or surface normal estimation~\cite{li2018learning2}. In order to compare with more related methods, we also evaluate these tasks individually in experiments. 

{\bf{Image relighting.}}
Most prior methods in image-based relighting require multi-image inputs~\cite{azinovic2019inverse,xu2018deep}. For example, in \cite{xu2018deep}, a scene is relit from a sparse set of five images under the optimal light directions predicted by CNNs. Single-image relighting is highly ill-posed, and needs priors. \cite{philip2019multi,yu2020self} target outdoor scenes, and benefit from priors of outdoor lighting models. \cite{meka2019deep,shu2017neural,shu2017portrait,sun2019single,sengupta2018sfsnet} target at portrait images, which is also a practical application for mobile AR. Single image relighting for general scenes have limited prior works. Yu et al. \cite{yu2020self} takes a single image as inputs, with the assumption of Lambertian scenes. In this work, we propose a novel non-Lambertian render layer, and demonstrate quick non-Lambertian relighting of general objects.  

\section{Overview}

We propose a deep neural network to solve single-image inverse rendering and object-level relighting. 
The overall pipeline is shown in Figure~\ref{fig:pipeline}. 
The whole pipeline is weakly-supervised with a supervised warm-up of Normal-Net, and self-supervised training of the whole pipeline. The self-supervised training is driven by the Relit Dataset. 
The details of Relit Dataset is intoduced in Section~\ref{sec:dataset}. 
In Section~\ref{sec:method}, we introduce the proposed pipeline following the order from single-image inverse rendering to differentiable non-Lambertian relighting. The weakly-supervised inverse rendering, including the proofs of theoretical fundamentals and convergence of the low-rank loss, are introduced in Section~\ref{sec:network}. 
The differentiable non-Lambertian rendering layers are introduced in Section~\ref{sec:relighting}. 

\section{The Relit Dataset}\label{sec:dataset}
\begin{figure*}[t]
	\centering
	\includegraphics[width=0.95 \linewidth]{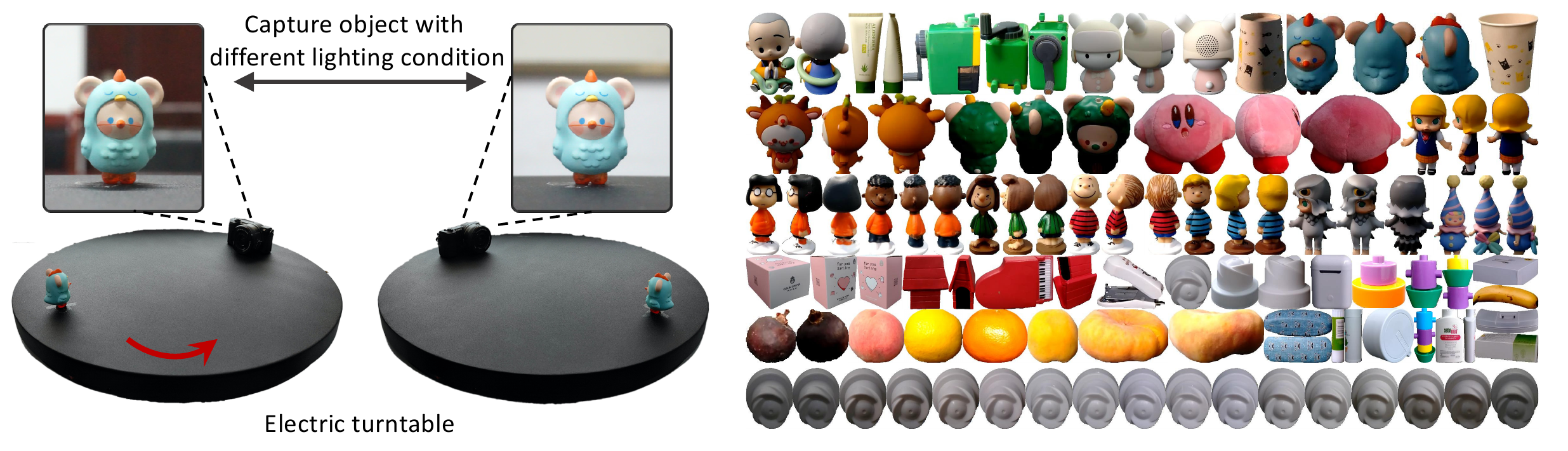}
	\caption{Left: The data capture set-up. Right: Selected objects in Relit dataset. The last row shows selected frames from one video. }
	\label{fig:dataset}
\end{figure*}
To capture foreground-aligned videos of objects under changing illuminations, we design an automatic device for data capture, as shown in Figure~\ref{fig:dataset} (left). The camera and object are placed on the turntable, and videos are captured as the turntable rotating. The target object stays static among the frames in captured videos, with changing illuminations and backgrounds. 
In summary, the Relit dataset consists of 500 videos for more than 100 objects under different indoor and outdoor lighting. Each video is 50 seconds, resulting in 1500 foreground-aligned frames under various lighting. In total, the Relit dataset consists of $750K$ images. 
Selected objects are shown in Figure~\ref{fig:dataset} (right). The objects cover a wide variety of shapes, materials, and textures. In Section~\ref{sec:method}, we introduce how to leverage Relit dataset to drive the self-supervised training. 
It can facilitate many tasks, such as image relighting and segmentation. 

\section {Our Method}\label{sec:method}

\subsection{Image formation model}

A coarse-level image formation model for inverse rendering is intrinsic image decomposition (IID), which is a long-standing low-level vision problem, decomposing surface reflectance from other properties, assuming Lambertian surfaces. For non-Lambertian surfaces, the model can be improved by adding a specular highlight term:

\begin{equation}
	I= I_d + H, \quad I_d = \mathcal{A}\odot S, \label{equation:iid}
\end{equation}

\noindent where $H$ is the specular highlight, $\mathcal{A}$ is the surface reflectance map, i.e. albedo map in IID, and $S$ is a term describing the shading related to illumination and geometry. Here $\odot$ denotes the Hadamard product. To be more specific, according to the well-known Phong model\cite{phong1975illumination} and Blinn-Phong model\cite{blinn1977models}, the image can be formulated as the sum of a diffuse term and a specular term:

\begin{equation}
	\centering
	\begin{aligned}
		I(p)=&I_d(p)+H(p),\\
		I_d(p)=&\mathcal{A}(p)S(p)=\mathcal{A}(p)\sum_{\omega \in \mathcal{L}}l_\omega(L_\omega\cdot n(p)),\\ H(p)=&\sum_{\omega \in \mathcal{L}}s_pl_\omega(\frac{L_\omega+v}{\|L_\omega+v\|}\cdot n(p))^\alpha,\label{equation:phong}
	\end{aligned}
\end{equation}

\noindent where $I(p)$ is the observed intensity and $n_p=(x,y,z)$ is the surface normal at pixel $p$. $\mathcal{L}$ is a set of sampled point lights in the lighting environment. $L_\omega$ and $l_\omega$ describe lighting direction and intensity of one point light $\omega$ in $\mathcal{L}$ respectively. $\mathcal{A}(p)$ and $s_p$ are defined as the diffuse and specular reflectance at pixel $p$, respectively. The specular term is not view independent, view direction $v$ is needed to calculate the reflectance intensity and $\alpha$ is a shininess constant. The differentiable approximation for Equation~(\ref{equation:phong}) is introduced in Section~\ref{sec:diffuserender}-\ref{sec:specrender}.

\subsection{Inverse rendering from a single image}\label{sec:network}

For relighting, we first inverse the rendering process to get 3D properties including geometry, reflectance, shading, illumination and specularities, then we can replace the illumination and re-render the objects. Following this order, we firstly introduce inverse rendering. 

For non-Lambertian object, we can perform specular highlight separation first by the Spec-Net. The specular parameters are then predicted in the specular branch, which is introduced in Section~\ref{sec:spec}. 


For diffuse branch, adopting separate networks to predict normal, lighting, shading, reflectance is the most straightforward choice. However, in this way, the diffuse component in the rendering equation (Equation (\ref{equation:phong})) is not respected, since relations between these properties are not constrained. 
Thus, we design a lightweight physically-motivated inverse rendering network, respecting the rendering equation strictly, as shown in Figure~\ref{fig:pipeline}. 
There are only two learnable network modules in our end-to-end diffuse inverse rendering pipeline.
Here we adopt spherical harmonics~\cite{ramamoorthi2001efficient} to represent illumination $\mathcal{L}$ in Equation (\ref{equation:phong})), which is calculated more efficiently than Monte Carlo integration of point lights:

\begin{equation}
\mathcal{L}=\sum_{l=0}^{\infty}\sum_{m=-l}^{l}C_{l,m}Y_{l,m}, \label{equation:SH}
\end{equation}

\noindent where $Y_{l,m}$ is the spherical harmonic basis of degree $l$ and order $m$, $C_{l,m}$ is the corresponding coefficient. Each environment lighting can be represented as the weighted sum of spherical harmonics. The irradiance can be well approximated by only 9 coefficients, 1 for $l = 0, m = 0$, 3 for $l = 1, -1 \leq m \leq 1$, and 5 for $l = 2, -2 \leq m \leq 2$. 

Normal-Net predicts surface normal maps $n$, and Light-Net regresses lighting coefficients $C_{l,m}$ in spherical harmonic representation. A total of 12 coefficients are predicted by Light-Net, where the last 3 coefficients present the illumination color. 
The shading $S$ is then rendered from the predicted normal and lighting, by a hard-coded differentiable rendering layer (no learnable parameters) in Section~\ref{sec:diffuserender}, following Equation (\ref{equation:phong}). The reflectance $\mathcal{A}$ is computed by Equation (\ref{equation:iid}) after rendering shading. The pipeline design is based on the physical rendering equation (Equation (\ref{equation:phong})), where relations among terms are strictly preserved. 




\subsubsection{Self-supervised low-rank constraint}\label{sec:unsupervised}


We have foreground-aligned videos of various objects under changing illuminations in Relit dataset. The target object is at a fixed position in each video, which enables pixel-to-pixel losses among frames. 

For each batch, $N$ images $I_1$, $I_2$, ..., $I_N$ are randomly selected from one video. Since the object is aligned in $N$ images under different lighting, one observation is that the reflectance should remain unchanged as an intrinsic property, and the resulting reflectance $\mathcal{A}_1$, $\mathcal{A}_2$, ..., $\mathcal{A}_N$ should be identical. However, due to the scale ambiguity between reflectance and lighting intensities, i.e., estimating reflectance as $\mathcal{A}$ and lighting as $\mathcal{L}$, is equivalent to estimating them as $w \mathcal{A}$ and $\frac{1}{w}\mathcal{L}$. A solution for supervised methods is defining a scale-invariant loss between ground truths and predictions. However the case is different here, there are no predefined ground truths. While adopting traditional Euclidean losses between every pair in $\mathcal{A}_1$, $\mathcal{A}_2$, ..., $\mathcal{A}_N$, it leads to degenerate results where all reflectance are converged to zero. To solve the problem, here we enforce $\mathcal{A}_1$, $\mathcal{A}_2$,..., $\mathcal{A}_N$ to be linearly correlated and propose a rank constraint as training loss. Therefore, a scaling factor $w$ does not affect the loss. 

We can compose a matrix $R$ with each reflectance $\mathcal{A}_i$ storing as one row. Ideally, rows in $R$ should be linearly correlated, i.e., $R$ should be rank one. 
We formulate a self-supervised loss by the distance between $R$ and its rank-one approximation. 
We introduce Theorem 1 below. 

\begin{theorem}\label{theorem:rankone}
\textbf{Optimal rank-one approximation.} By SVD,  $R = U\Sigma V^T$,  $\Sigma=diag(\sigma_1,\sigma_2,...\sigma_k)$, $\Sigma'=diag(\sigma_1,0,...)$, $\bar{R} = U\Sigma'V^T$ is the optimal rank-one approximation for R, which meets:

\begin{equation}
	\label{equation:optimal}
	||\bar{R}-R||^2_F = \min_{\scriptscriptstyle b\in\mathcal{R}^{N},c\in\mathcal{R}^{d}}||bc^T-R||^2_F,
\end{equation}

\noindent where $||\cdot||_F$ denotes the Frobenius norm of a matrix. 

\end{theorem}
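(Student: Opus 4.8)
The plan is to prove this as the rank-one case of the Eckart--Young theorem, using only orthogonal invariance of the Frobenius norm together with the Cauchy--Schwarz inequality, so the argument stays self-contained. First I would use that $\|\cdot\|_F$ is invariant under left- and right-multiplication by orthogonal matrices: since $R = U\Sigma V^T$ with $U,V$ orthogonal,
$$\|bc^T - R\|_F^2 = \|U^T(bc^T - R)V\|_F^2 = \|(U^Tb)(V^Tc)^T - \Sigma\|_F^2 .$$
Because $U$ and $V$ are invertible, as $b$ and $c$ range over $\mathbb{R}^N$ and $\mathbb{R}^d$ the vectors $b' := U^Tb$ and $c' := V^Tc$ also range over all of $\mathbb{R}^N$ and $\mathbb{R}^d$. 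Hence minimizing $\|bc^T-R\|_F^2$ is equivalent to finding the best rank-$\le 1$ approximation of the diagonal matrix $\Sigma$.

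Next I would parametrize an arbitrary matrix of rank at most one as $M = \sigma uv^T$ with $\sigma \ge 0$ and $u,v$ unit vectors (the value $\sigma=0$ recovering the zero matrix), and expand
$$\|M - \Sigma\|_F^2 = \|\Sigma\|_F^2 - 2\sigma\, u^T\Sigma v + \sigma^2 .$$
For fixed $u,v$ this is a quadratic in $\sigma$ minimized over $\sigma \ge 0$ at $\sigma = \max(u^T\Sigma v,0)$, with minimum value $\|\Sigma\|_F^2 - (u^T\Sigma v)_+^2$. So the problem reduces to maximizing $(u^T\Sigma v)^2$ over unit vectors. Writing $\Sigma = \mathrm{diag}(\sigma_1,\dots,\sigma_k)$ with $\sigma_1 \ge \cdots \ge \sigma_k \ge 0$, Cauchy--Schwarz gives $u^T\Sigma v = \sum_i \sigma_i u_i v_i \le \sigma_1 \sum_i |u_i||v_i| \le \sigma_1\|u\|\|v\| = \sigma_1$, with equality at $u=v=e_1$. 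Therefore $\min_M \|M-\Sigma\|_F^2 = \|\Sigma\|_F^2 - \sigma_1^2 = \sum_{i\ge 2}\sigma_i^2$, attained at $M = \sigma_1 e_1e_1^T = \Sigma'$.

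Finally I would translate back: the optimizer $M=\Sigma'$ corresponds to a pair $b,c$ with $U^Tb$ and $V^Tc$ proportional to $e_1$, i.e. to $\bar R = U\Sigma'V^T$, and a direct check confirms $\|\bar R - R\|_F^2 = \|U(\Sigma'-\Sigma)V^T\|_F^2 = \|\Sigma'-\Sigma\|_F^2 = \sum_{i\ge 2}\sigma_i^2$, which matches the minimum. This establishes~(\ref{equation:optimal}).

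I expect the only real subtlety to be bookkeeping rather than depth: making sure the substitution $b'=U^Tb$, $c'=V^Tc$ genuinely sweeps out every rank-$\le 1$ matrix so that no minimizer is lost, handling the nonnegativity constraint on $\sigma$ and the sign of $u^T\Sigma v$ correctly, and observing that for the diagonal $\Sigma$ the bound $\max_{\|u\|=\|v\|=1} u^T\Sigma v = \sigma_1$ follows from Cauchy--Schwarz alone, with no heavier spectral machinery needed. An even more elementary variant is to write $\|b'c'^T - \Sigma\|_F^2 = \sum_i (b'_ic'_i - \sigma_i)^2 + \sum_{i\ne j}(b'_ic'_j)^2$ and minimize directly, but the $\sigma uv^T$ parametrization keeps the algebra shortest.
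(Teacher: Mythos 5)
Your proof is correct, but it takes a different route from the paper's. The paper fixes a unit vector $b$, observes that the optimal $c$ makes each column $c_i b$ the orthogonal projection of the corresponding column $r_i$ of $R$ onto $b$ (so $c^T = b^T R$), reduces the problem to $\min_{\|b\|_2=1}\|bb^TR - R\|_F^2$, and then uses the Pythagorean identity to turn this into maximizing $\|b^TR\|_2^2 = \sum_i (b^Tu_i)^2\sigma_i^2$, a convex combination of the $\sigma_i^2$ with weights summing to one, which is maximized at $b=u_1$. You instead invoke orthogonal invariance of $\|\cdot\|_F$ to reduce to approximating the diagonal matrix $\Sigma$, eliminate the scalar by minimizing a quadratic, and bound the bilinear form $u^T\Sigma v\le\sigma_1$ by Cauchy--Schwarz. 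The paper's argument buys an explicit projection interpretation of the optimal $c$ for every candidate $b$ and defers the SVD to the final step; yours strips off $U$ and $V$ up front, stays symmetric in the two factors, and replaces the Rayleigh-quotient step with a one-line Cauchy--Schwarz estimate. Your version is also somewhat more careful on two points the paper glosses over: the nonnegativity/sign of the scalar factor, and the justification of the Pythagorean step (the paper's auxiliary inequality $\|R\|_F^2\ge\sum_i\|c_ib\|_2$ is stated loosely, whereas orthogonality of $bb^TR-R$ and $bb^TR$ is what is actually needed). Your one remaining bookkeeping obligation is to use the full SVD, with $U\in\mathcal{R}^{N\times N}$ and $V\in\mathcal{R}^{d\times d}$ orthogonal and $\Sigma$ rectangular diagonal, so that $b\mapsto U^Tb$ and $c\mapsto V^Tc$ are genuinely bijections; with the compact SVD the substitution would not sweep out all rank-$\le 1$ matrices.
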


The proof of Theorem~\ref{theorem:rankone} can be found in Appendix~\ref{sec:supp}. 

Therefore, we define the low-rank loss as:

\begin{equation}
f(R) = ||\bar{R}-R||^2_F. 
\end{equation} \label{equation:loss}

Its convergence is proven as below, fitting the needs of learning-based approaches training by gradient descents. 

Since the gradient of $\bar{R}$ is detached from the training, the derivative of $f(R)$ can be accomplished as $\nabla f(R)=-2(\bar{R}-R)$. According to the gradient descent algorithm, with a learning rate  $\eta$, the result $R^{(n+1)}$ ($R$ after $n+1$ training iterations), can be deduced as: 

\begin{equation}
\label{equation:loss_gd}
R^{(n+1)} = R^{(n)} + 2\eta(\bar{R}-R^{(n)}),
\end{equation} 


\begin{theorem}
\textbf{Convergence of $f(R)$.} 
The loss $f(R)$ would converge to a fixed point, which is $\bar{R}$ while $0<\eta<0.5$, 
\begin{equation}
	\label{equation:theorem2}
	\lim_{n \to \infty}R^{(n)}=\bar{R} \Leftarrow  0<\eta<0.5, R^{(0)}=R. 
\end{equation} 
\end{theorem}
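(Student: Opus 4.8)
The plan is to analyze the discrete dynamical system (\ref{equation:loss_gd}) by diagonalizing it in the singular basis of the initial matrix. First I would rewrite the update as the affine combination $R^{(n+1)} = (1-2\eta)\,R^{(n)} + 2\eta\,\bar{R}^{(n)}$, where $\bar{R}^{(n)}$ is the optimal rank-one approximation of the current iterate $R^{(n)}$ delivered by Theorem~\ref{theorem:rankone}. Writing the SVD $R^{(n)} = U^{(n)}\Sigma^{(n)}(V^{(n)})^T$ with $\Sigma^{(n)} = \mathrm{diag}(\sigma_1^{(n)},\dots,\sigma_k^{(n)})$, the crucial point is that $\bar{R}^{(n)} = U^{(n)}\,\mathrm{diag}(\sigma_1^{(n)},0,\dots,0)\,(V^{(n)})^T$ has exactly the same left and right singular vectors as $R^{(n)}$, so the two matrices are simultaneously diagonalized and
\[
R^{(n+1)} = U^{(n)}\,\mathrm{diag}\big(\sigma_1^{(n)},\,(1-2\eta)\sigma_2^{(n)},\,\dots,\,(1-2\eta)\sigma_k^{(n)}\big)\,(V^{(n)})^T .
\]
Because $0<\eta<0.5$ forces $0<1-2\eta<1$, these new diagonal entries are nonnegative and still nonincreasing, so the right-hand side is itself a valid SVD of $R^{(n+1)}$ sharing the singular vectors $U^{(n)},V^{(n)}$.

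Iterating this from $R^{(0)} = R = U\Sigma V^T$, an easy induction gives the closed form $R^{(n)} = U\,\mathrm{diag}(\sigma_1,(1-2\eta)^n\sigma_2,\dots,(1-2\eta)^n\sigma_k)\,V^T$: the top singular triple $(\sigma_1,u_1,v_1)$ is fixed while every other singular value is damped by the factor $(1-2\eta)^n$. In particular $\bar{R}^{(n)} = U\,\mathrm{diag}(\sigma_1,0,\dots,0)\,V^T = \bar{R}$ for all $n$, i.e. the rank-one target never moves along the trajectory. Substituting this invariance back into the update collapses it to the scalar contraction $R^{(n+1)} - \bar{R} = (1-2\eta)\,(R^{(n)} - \bar{R})$, so $\|R^{(n)} - \bar{R}\|_F = (1-2\eta)^n\,\|R - \bar{R}\|_F \to 0$; equivalently, one just lets $n\to\infty$ in the closed form. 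Checking $R^{(n+1)}=\bar R$ whenever $R^{(n)}=\bar R$ confirms $\bar R$ is the (unique, in this regime) fixed point, which establishes (\ref{equation:theorem2}).

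The step I expect to be the main obstacle is justifying that the rank-one approximation is invariant under the iteration, i.e. $\bar{R}^{(n)} = \bar{R}$ for all $n$, since this rests on the singular vectors being preserved and the SVD is only literally unique when the relevant singular values are separated. I would handle the degenerate case $\sigma_1 = \sigma_2$ (where the dominant singular subspace, and hence $\bar{R}$ itself, is ambiguous already in Theorem~\ref{theorem:rankone}) by fixing one consistent SVD along the iteration and observing that the damping $\sigma_i \mapsto (1-2\eta)\sigma_i$ for $i\ge 2$ strictly separates $\sigma_1$ from the remaining singular values after the first step, so the target stabilizes and the contraction estimate still forces convergence. A minor point worth stating explicitly beforehand is that the gradient used in (\ref{equation:loss_gd}), $\nabla f(R) = -2(\bar{R}-R)$, is the stop-gradient derivative obtained by treating $\bar{R}$ as detached from the computational graph; making this precise pins down exactly which map is being iterated.
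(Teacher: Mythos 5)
Your proposal is correct and follows essentially the same route as the paper's proof: diagonalize the update in the SVD basis of $R$, observe that each step damps every singular value except $\sigma_1$ by the factor $(1-2\eta)$, and let $n\to\infty$ in the resulting closed form $R^{(n)}=U\,\mathrm{diag}(\sigma_1,(1-2\eta)^n\sigma_2,\dots,(1-2\eta)^n\sigma_k)V^T$. You are in fact somewhat more careful than the paper, since you explicitly justify that the rank-one target $\bar{R}^{(n)}$ remains equal to $\bar{R}$ along the trajectory and you address the degenerate case $\sigma_1=\sigma_2$, both of which the paper leaves implicit.
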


\begin{proof}
According to Equation (\ref{equation:loss_gd}), $R = U\Sigma V^T$ and $\bar{R} = U\Sigma' V^T$, we have:
\begin{equation}
	\label{equation:r1}
	\begin{aligned}
		R^{(1)}&=R+2\eta(\bar{R}-R)\\
		&=U \mbox{diag} \{\sigma_1, (1-2\eta)\sigma_2,\dots, (1-2\eta)\sigma_k\}V^T.
	\end{aligned}
\end{equation} 

\noindent Since $0<\eta<0.5$, we have $1-2\eta < 1$, $\{\sigma_1, (1-2\eta)\sigma_2,\dots, (1-2\eta)\sigma_k\}$ are still descending. Therefore, Equation (\ref{equation:r1}) is the SVD form for $R^{(1)}$. Similarly, we have: 
\vspace{-0.2cm}
\begin{equation}
	\label{equation:r2}
	\begin{array}{l}
		R^{(2)}=U \mbox{diag} \{\sigma_1, (1-2\eta)^2\sigma_2,\dots, (1-2\eta)^2\sigma_k\}V^T. 
	\end{array}
\end{equation}

\noindent Repeat $n$ iterations, we have the expression for $R^{(n)}$: 
\vspace{-0.2cm}
\begin{equation}
	\label{equation:rn}
	R^{(n)}=U \mbox{diag} \{\sigma_1, (1-2\eta)^n\sigma_2,\dots, (1-2\eta)^n\sigma_k\}V^T. 
\end{equation}

\noindent Since $|1-2\eta|<1$, Equation (\ref{equation:rn}) can be reduced to:
\begin{equation}
	\label{equation:rn_final}
	\begin{aligned}
		\lim_{n \to \infty}R^{(n)}&=U \mbox{diag} \{\sigma_1, 0,\dots, 0\}V^T\\
		&=U\Sigma' V^T=\bar{R}. 
	\end{aligned}
\end{equation}
\vspace{-0.5cm}
\end{proof}

In our diffuse branch, the low-rank loss of reflectance back-propagates to Normal-Net and Light-Net, and trains both in self-supervised manners. 
\subsubsection{Specularity separation}\label{sec:spec}

To deal with the specular highlights, we add a Spec-Net, to remove the highlights before diffuse inverse rendering. On highlight regions, pixels are usually saturated and tends to be white. Based on it, we automatically evaluate the percentage of saturated pixels on the object image. If the percentage exceeds $5\%$, Spec-Net will be performed, otherwise the object is considered as diffuse and Spec-Net will not be performed. We found that under this setting the results are better than performing Spec-Net on all images, since learning-based highlight removal methods tend to overextract highlights on diffuse images.  
The training of Spec-Net is initialized from the highlight removal network of Yi et al. \cite{yi2020leveraging}, enhanced with images of non-Lambertian objects in our Relit Dataset by self-supervised finetuning. From the Di-chromatic reflection model~\cite{shafer1985color}, if illumination colors remain unchanged, the rg-chromaticity of Lambertian reflection should be unchanged as well. Thus the finetuning can be driven by the low-rank constraint on rg-chromaticity of diffuse images after removing specular highlights, following the image formation model in Equation~(\ref{equation:iid}). 

With the separated specular highlight, we can further predict specular reflectance $s_p$ and shininess (smoothness) $\alpha$ in Equation~(\ref{equation:phong}). The training is self-supervised by re-rendering loss between the separated highlight by Spec-Net, and the re-rendered specular highlight by the predicted $s_p$, $\alpha$, lighting coefficients $C_{l,m}$ from Light-Net via the specular rendering layer in Section~\ref{sec:specrender}. 

\subsubsection{Joint training}


Firstly, the Spec-Net is trained to separate input images into specular highlight and diffuse images, as the first phase. Since training to predict specular reflectance and smoothness requires lighting coefficients from Light-Net, Light-Net and Normal-Net in the diffuse branch are trained as the second phase. Training to predict specular reflectance and smoothness is the last phase. 

In the second phase, Light-Net predicts spherical harmonic lighting coefficients $C_{l,m}$ corresponding to each basis $Y_{l,m}$. There is an axis ambiguity between Normal-Net and Light-Net predictions. For example, predicting a normal map with the $x$-axis pointing right with positive coefficients of the bases related to $x$, is equivalent to predicting a normal map with $x$-axis pointing left with corresponding coefficients being negative. They would render the same shading results. Normal-Net and Light-Net are in a chicken-and-egg relation and cannot be tackled simultaneously.
We employ a joint training scheme to train Normal-Net and Light-Net alternatively. To initialize the coordinate system in Normal-Net, we use a small amount of synthetic data (50k images) from LIME~\cite{meka2018lime} to train an initial Normal-Net. Then we freeze Normal-Net and train Light-Net from scratch by our low-rank loss on reflectance, as the $1^{st}$ round joint training. Then Light-Net is frozen and Normal-Net is trained from the initial model by the same low-rank loss on reflectance. The joint training is driven by the Relit dataset, using 750k unlabeled images. Normal-Net is weakly-supervised due to the pretraining and all other nets are self-supervised. 
The joint training scheme effectively avoids the axis ambiguity and the quantitative ablation studies are shown in Section~\ref{exp:inv}. 

\subsection{Non-Lambertian object relighting}\label{sec:relighting}

After inverse rendering, an input photo is decomposed into normal, lighting, reflectance, shading and a possible specular component by our network. 
With these predicted properties, along with the lighting of new scenes, the object is re-rendered and inserted into new scenes. We propose a specular rendering layer in Section~\ref{sec:specrender}. Given specularity parameters (specular reflectance and smoothness), we can relight the object in a wide range of materials.  

Both diffuse and specular render layers take spherical harmonic coefficients as lighting inputs, which present low-frequency environment lighting. The transformations from HDR lighting paranomas to SH coefficients are pre-computed offline. 
We also implement a mobile App, whose details are in Appendix~\ref{sec:supp}. 

\subsubsection{Diffuse rendering layer}\label{sec:diffuserender}

In order to encode the shading rendering while keeping the whole network differentiable, we adopt a diffuse rendering layer respecting to Equation (\ref{equation:phong})-(\ref{equation:SH}), based on \cite{ramamoorthi2001efficient}. 
The rendering layer takes the spherical harmonic coefficients as lighting inputs. Combining Equation (\ref{equation:phong})-(\ref{equation:SH}), introducing  coefficients $\hat{A}_l$ from \cite{ramamoorthi2001relationship}, and incorporating normal into the spherical harmonic bases, the shading and the diffuse component of relit images are rendered by:
\begin{equation}
	\label{equation:sh2}
	I_d(p)=\mathcal{A}(p)\sum_{\omega \in \mathcal{L}}l_\omega(L_\omega\cdot n(p))=\mathcal{A}(p)\sum_{l,m}\hat{A}_l C_{l,m}Y_{l,m}(\theta,\phi),
\end{equation}

\noindent where $(\theta,\phi)$ is the spherical coordinates where $(x,y,z)=(\sin\theta\cos\phi, \sin\theta\sin\phi,\cos\theta)$ and $n(p)=(x,y,z)$.  


\subsubsection{Specular rendering layer}\label{sec:specrender}
\begin{figure}
	\centering
	\includegraphics[width= \linewidth]{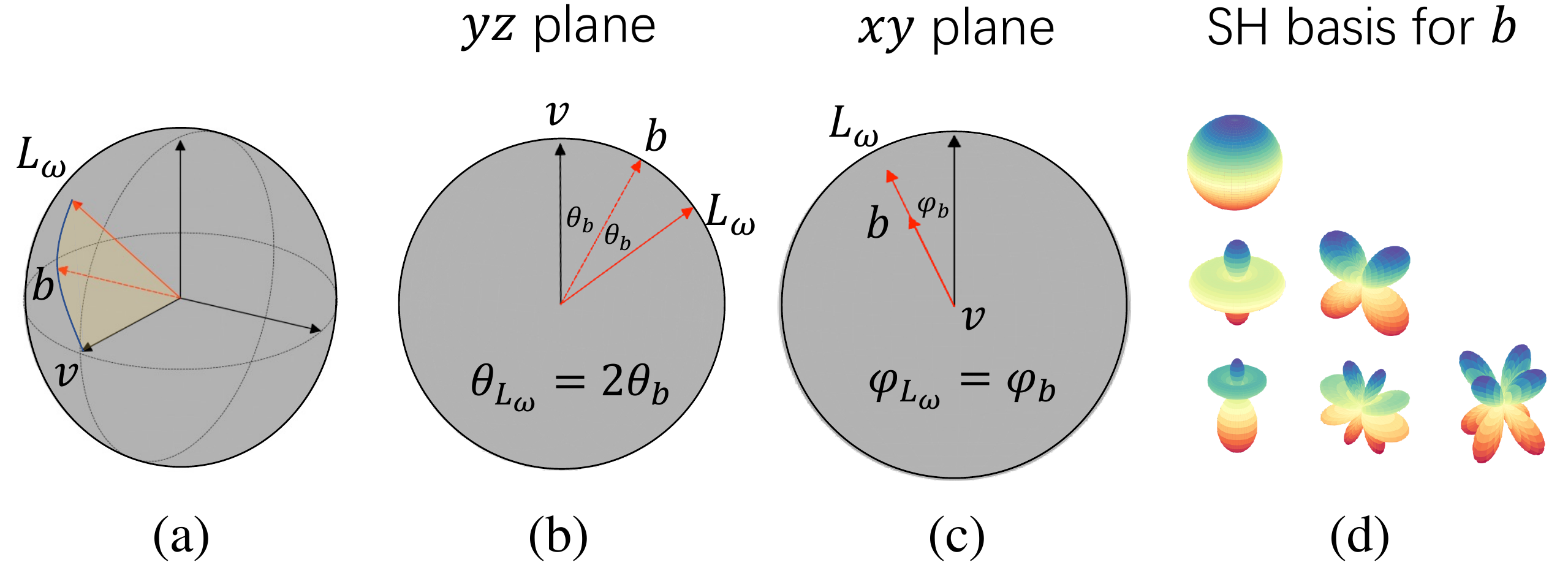}\
	\caption{The visual illustration of $b$, $L_\omega$ and $v$. (a) The view point $v$ is set as $[0, 0, 1]$ in our inverse rendering problem. $b$ is the bisector of $L_\omega$ and $v$. (b) Observe $b$, $L_\omega$ and $v$ in the $yz$ plane view. We find that the polar angle $\theta_{L_\omega}=2\theta_{b}$. (c) We find that the azimuth angle $\phi_{L_\omega}=\phi_{b}$ from the $xy$ plane view. (d) Then we get spherical harmonic basis $Y_{l,m}(\theta_{b},\phi_{b})$ for differentiable rendering of the specular component.}
	\label{fig:spec}
\end{figure}
Since the specular componet is view dependent, which can not be simply parameterized with $Y_{l,m}$ and $C_{l,m}$ as in the diffuse renderer. 
With the assumption of distant lighting, the view point is fixed. As shown in Figure~\ref{fig:spec}, $b = \frac{L_\omega+v}{\|L_\omega+v\|}$ is the bisector of light direction $L_\omega$ and view point $v$. Note that, $b$ has the same azimuth angle $\phi$ as $L_\omega$ while polar angle $\theta$ is only a half under spherical coordinate system as shown in Figure~\ref{fig:spec}. 
Since the predicted normal map has a pixel-to-pixel correspondence to the input image, which means the normal map is projected perspectively. We only need to apply orthogonal projection in the re-rendering step by assuming viewing the object the $z$ direction, which means $v = [0, 0, 1]$. The re-rendered images share a pixel-wise correspondence to observed images, following a perspective projection. 

Now we can modify $Y_{l,m}$ into $\hat{Y}_{l,m}$, and use $\hat{A}_l\hat{Y}_{l,m}$ to describe the distribution of all possible $b$ as well, keeping lighting coefficients $C_{l,m}$ unchanged for sharing between both renderers. 
\begin{align}
	\hat{Y}_{0,0}(\theta, \phi)&=Y_{0,0}(2\theta, \phi) = c_0\nonumber\\
	\hat{Y}_{1,1}(\theta, \phi)&= Y_{1,1}(2\theta, \phi) = c_1\sin 2\theta\cos\phi = 2c_1xz\nonumber\\
	\hat{Y}_{1,-1}(\theta, \phi)&=Y_{1,-1}(2\theta, \phi) = c_1\sin 2\theta\sin\phi = 2c_1yz\nonumber\\
	\hat{Y}_{1,0}(\theta, \phi)&=Y_{1,0}(2\theta, \phi) = c_1\cos 2\theta = c_1(2z^2-1)\nonumber\\
	\hat{Y}_{2,-2}(\theta, \phi) &=Y_{2,-2}(2\theta, \phi)= 4c_2xyz^2\nonumber\\
	\hat{Y}_{2,1}(\theta, \phi) &=Y_{2,1}(2\theta, \phi)= c_2(4xz^3-2xz)\nonumber\\
	\hat{Y}_{2,-1}(\theta, \phi) &=Y_{2,-1}(2\theta, \phi)= c_2(4yz^3-2yz)\nonumber\\
	\hat{Y}_{2,0}(\theta, \phi) &=Y_{2,0}(2\theta, \phi)= c_3(3(4z^4-4z^2+1)-1)\nonumber\\
	\hat{Y}_{2,2}(\theta, \phi) &=Y_{2,2}(2\theta, \phi) = c_4(4x^2z^2-4y^2z^2)\nonumber\\
	c_0&=0.282095,c_1=0.488603\nonumber
	\\
	c_2=1.092&548,c_3=0.315392,c_5=0.546274\nonumber
\end{align}

Hence, we can write the differentiable rendering approximation for the specular component similar as Equation~(\ref{equation:sh2}):
\begin{equation}
	\begin{aligned}
		H(p)&=s_p\sum_{\omega \in \mathcal{L}}l_\omega(\frac{L_\omega+v}{\|L_\omega+v\|}\cdot n(p))^\alpha \\
		&\approx s_p\sum_{l,m} C_{l,m}(\hat{A}_l\hat{Y}_{l,m}(\theta,\phi))^\alpha.
	\end{aligned}
\end{equation}





\section {Experiments}

In this section, we evaluate the performance of inverse rendering and image relighting. The inverse rendering evaluation with a series of state-of-the-art methods
is presented in Section~\ref{exp:inv}, along with several ablations. 
For image relighting, we provide quantitative evaluations on a synthetic dataset in Section~\ref{exp:relighting}, and real object insertion is demonstrated in Figure~\ref{fig:teaser} and the project page. 

\subsection{Inverse rendering}\label{exp:inv}
Many prior works address surface normal estimation or intrinsic image decomposition but not both, 
and there are no benchmark datasets for inverse rendering, 
we evaluate these two tasks individually. Evaluations on lighting and specularity are in Appendix~\ref{sec:supp}. 
The end-to-end inverse rendering takes 0.15 seconds per image at $256\times 256$ on a Titan T4 GPU.

\noindent{\bf{Intrinsic image decomposition.}} We compare our self-supervised intrinsic image decomposition to several inverse rendering methods (InverseRenderNet~\cite{yu2019inverserendernet}, RelightNet~\cite{Lichy_2021_CVPR}, ShapeAndMaterial~\cite{Lichy_2021_CVPR}), and intrinsic image decomposition methods~\cite{shi2017learning,yi2020leveraging,liu2020unsupervised,li2018cgintrinsics} on MIT Intrinsics dataset, which is a commonly-used benchmark dataset for IID. 
To evaluate the performances and cross-dataset generalizations, all methods are not finetuned on this dataset. 
We adopt scale-invariant MSE (SMSE) and local scale-invariant MSE (LMSE) as error metrics, which are designed for this dataset~\cite{grosse2009ground}. 
As shown in Table~\ref{table:MIT} (visual comparisons are in the supplementary material), our method outperforms all unsupervised and self-supervised methods and has comparable performance with supervised ones. 
Note that the assumptions of white illumination and Lambertian surfaces in this dataset fit the cases of synthetic data, which benefit supervised methods. However, self-supervised and unsupervised methods enable training on unlabeled real-image datasets, which produce better visual results on unseen natural images.
As shown in Figure~\ref{fig:endtoend}, 
SIRFS~\cite{barron2015shape}, a method based on scene priors, fails to decompose reflectance colors. InverseRenderNet~\cite{yu2019inverserendernet} and RelightingNet~\cite{yu2020self} tend to predict a similar color of shading and reflectance, leading to unnatural reflectance colors.  ShapeAndMaterial~\cite{Lichy_2021_CVPR} generates visually good results but has artifacts on reflectance due to specular highlights. Our method decomposes these components by considering non-Lambertian cases. 

\begin{figure*}
	\centering
	\includegraphics[width=\linewidth]{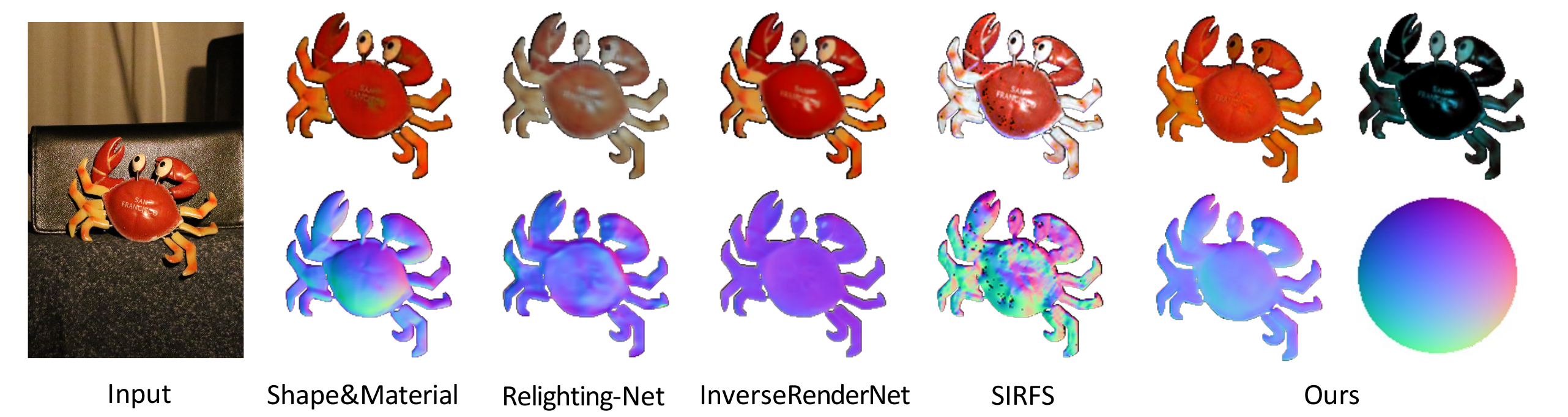}\
	\caption{Qualitative comparisons on an unseen image, comparing with state-of-the-art methods. The first row shows the reflectance of all methods and specular highlights of our method. The second row shows estimated normal maps and the colormap for reference. }
	\label{fig:endtoend}
\end{figure*}

\begin{figure*}
	\centering
	\includegraphics[width=\linewidth]{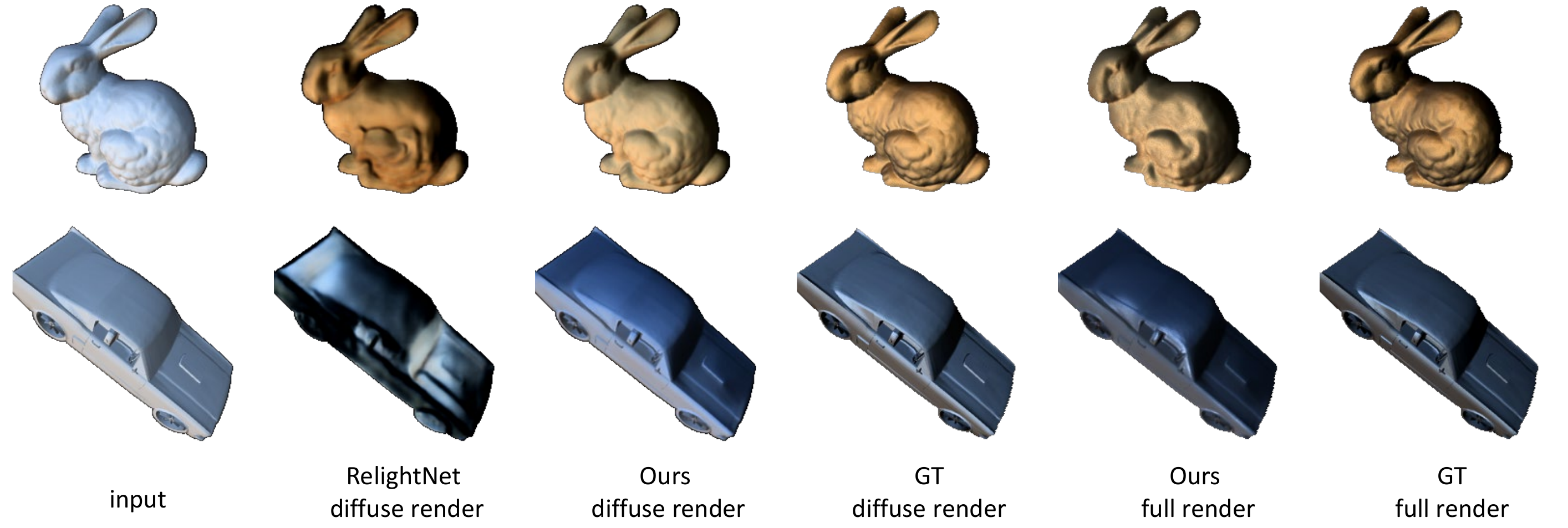}\
	\caption{Comparisons of object relighting with RelightNet~\cite{yu2020self} and ground truths. }
	\label{fig:relighting}
\end{figure*}

\begin{table}[t]
	\caption{Quantitative comparisons with state-of-the-art alternatives and ablation study of  intrinsic image decomposition on MIT intrinsic dataset. }\label{table:MIT}
	\vspace{0pt}
	\centering
	\scalebox{0.8}{
		\begin{tabular}{c c c c c } 
			\hline 
			Methods&Supervision&Data type&SMSE&LMSE\\
			\hline
			Shi et al.~\cite{shi2017learning}&Sup.& Synthetic&0.0194&0.0318\\
			Li et al.~\cite{li2018cgintrinsics}&Sup.&Synthetic&0.0186&\bf{0.0259}\\
			Shape\&Material~\cite{Lichy_2021_CVPR}&Sup.&Synthetic&\bf{0.0150}&0.0309\\
			\hline
			RelightingNet~\cite{yu2020self}&Self-sup.&Real&0.0368&0.1077\\
			Yi et al.~\cite{yi2020leveraging}&Unsup.&Real&0.0231&0.0422\\
			InverseRenderNet~\cite{yu2019inverserendernet}&Self-sup.&Real&0.0299&0.0855\\
			Liu et al.~\cite{liu2020unsupervised}&Unsup.&Real&0.0193&0.0428\\
			Ours&Self-sup.&Real&\bf{0.0186}&\bf{0.0369}\\
			\hline
			$1^{st}$ round training&Self-sup.&Real&0.0224&00420\\
			w/o joint training&Self-sup.&Real&0.0216&0.0399\\
			loss$^+$ ($\sigma_2$)&Self-sup.&Real&0.0357&0.0513\\
			loss* ($\sigma_2/\sigma_1$)&Self-sup.&Real&0.0808&0.2137\\
			\hline
	\end{tabular}}
\end{table}
\begin{table}[t]
	\caption{Quantitative comparisons with state-of-the-art alternatives and ablation study of surface normal estimation on the dataset from Janner et al.\cite{janner2017self}.}\label{table:janner}
	\centering
	\scalebox{0.8}{
		\begin{tabular}{c c c } 
			\hline 
			Methods&MSE&DSSIM\\
			\hline
			SIRFS~\cite{barron2013intrinsic}&0.0230&0.0243\\
			SVBRDF~\cite{li2018learning2}&0.0144&0.0278\\
			InverseRenderNet~\cite{yu2019inverserendernet}&0.0084&0.0272\\
			RelightNet~\cite{yu2020self}&0.0080&0.0265\\
			ShapeAndMaterial~\cite{Lichy_2021_CVPR}&0.0060&0.0228\\
			Ours&\bf{0.0054}&\bf{0.0201}\\
			\hline
			$1^{st}$ round training&0.0061&00219\\
			w/o joint training&0.0065&0.0228\\ 
			loss$^+$ ($\sigma_2$)&0.0059&0.0213\\
			loss* ($\sigma_2/\sigma_1$)&0.0083&0.0309\\
			\hline
			
	\end{tabular}}
\end{table}

\noindent{\bf{Normal estimation.}} We compare our method with several inverse rendering methods~\cite{barron2015shape,li2018learning2,yu2019inverserendernet,yu2020self,Lichy_2021_CVPR} on synthetic dataset from Janner et al.~\cite{janner2017self}. Since the dataset is too large (95k), and SIRFS~\cite{barron2015shape} takes one minute for each data, a testing set of 500 images is uniformly sampled, covering a wide variety of shapes. In Table~\ref{table:janner}, the evaluations are reported with two error metrics, MSE and DSSIM, measuring pixel-wise and overall structural distances. Our method yields the best performance. 
Qualitative comparisons are shown in Appendix~\ref{sec:supp}. 

{\noindent\bf{Ablations.}} 
We present ablations in the last four rows in Table~\ref{table:MIT}-\ref{table:janner}.  $1^{st}$ round training denotes the networks of initial Normal-Net and self-supervised Light-Net. ``w/o joint training'' denotes training Normal-Net and Light-Net simultaneously, rather than alternatively. Previous works propose different formulations of low-rank loss, as the second singular value ($\sigma_2$)~\cite{yi2018faces,zhu2020adacoseg} or the second singular value normalized by the first one ($\frac{\sigma_2}{\sigma_1}$)~\cite{yi2020leveraging} to enforce a matrix to be rank one. 
As discussed in the original papers, these losses are unstable in training and would degenerate to local optima. 
The proposed low-rank constraint is more robust as proven, not suffering from local optimas. More discussions and visual comparisons of these low-rank losses are in Appendix~\ref{sec:supp}. 

\begin{table}[t]
	\caption{Quantitative evaluation on relighting. 
	}\label{table:relighting} 
	\centering 
	\scalebox{0.7}{
		\begin{tabular}{c |c c| c c |c c} 
			\hline 
			&\multicolumn{2}{c|}{Baseline*}&\multicolumn{2}{c|}{RelightNet}&\multicolumn{2}{c}{Ours}\\
			&MSE&DSSIM&MSE&DSSIM&MSE&DSSIM\\
			\hline
			Diffuse &0.2210&0.1350&0.1144&0.0788&\textbf{0.0926}&\textbf{0.0616}\\
			With specularity&0.2152&0.1272&-&-&\textbf{0.0876}&\textbf{0.0720}\\
			\hline
		\end{tabular}
	}
	\vspace{-0.5cm}
\end{table}

\subsection{Image relighting}\label{exp:relighting}

After inverse rendering, a differentiable non-Lambertian renderer is used to relight the object under new lighting. 
The rendering is efficient, taking 0.35 seconds per image at $256\times 256$ on a single Tesla T4 GPU. 
For quantitative evaluations, we rendered an evaluation set of 100 objects under 30 lighting environments, with various materials. 
For each object, we use one image under one lighting as input, and relight it under the other 29 lighting for evaluation. We compare our method with a state-of-the-art method RelightNet~\cite{yu2020self}, which only provides diffuse relighting. To be fair, we compare them on diffuse relighting only. Ours is evaluated for both diffuse and non-Lambertian relighting. 
Comparisons are shown in Figure~\ref{fig:relighting} and Table~\ref{table:relighting}, more in Appendix~\ref{sec:supp}. Baseline* in the table denotes naive insertions without relighting.
Object insertion and App demos are on the project page, where our method relights and inserts objects into new scenes realistically. 
\vspace{-0.2cm}
\section{Conclusions}

We present a single-image relighting approach based on weakly-supervised inverse rendering, driven by a large foreground-aligned video dataset and a low-rank constraint. 
We propose the differentiable specular renderer for low-frequency non-Lambertian rendering. Limitations including shadows and parametric models are discussed in Appendix~\ref{sec:supp}. 
\paragraph{Acknowledgements. }  
We thank Kun Xu for the helpful discussions. We thank Sisi Dai and Kunliang Xie for data capturing. This work is supported in part by the National Key Research and Development Program of China (2018AAA0102200), NSFC (62002375, 62002376, 62132021), Natural Science Foundation of Hunan Province of China (2021JJ40696, 2021RC3071, 2022RC1104) and NUDT Research Grants (ZK22-52).
{\small
	\bibliographystyle{ieee_fullname}
	\bibliography{egbib}
}
\clearpage
\begin{appendices}
	
	\section{Supplementary material}\label{sec:supp}
	In this appendix, we introduce additional experiments, discussions, details of relighting video demos, Android app implementation, the Relit dataset, as well as network and training details. 
	
	\subsection{Mathematical proofs of Theorem~1}
	\begin{theorem}\label{theorem:rankone}
		\textbf{Optimal rank-one approximation.} By SVD,  $R = U\Sigma V^T$,  $\Sigma=diag(\sigma_1,\sigma_2,...\sigma_k)$, $\Sigma'=diag(\sigma_1,0,...)$, $\bar{R} = U\Sigma'V^T$ is the optimal rank-one approximation for R, which meets:
		
		\begin{equation}
			\label{equation:optimal}
			||\bar{R}-R||^2_F = \min_{\scriptscriptstyle b\in\mathcal{R}^{N},c\in\mathcal{R}^{d}}||bc^T-R||^2_F,
		\end{equation}
		
		\noindent where $||\cdot||_F$ denotes the Frobenius norm of a matrix. 

	\end{theorem}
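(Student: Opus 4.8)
The plan is to prove this as the rank-one case of the Eckart--Young theorem, by reducing the two-variable minimization over $(b,c)$ to a one-dimensional projection problem and then reading off the answer from the singular values. First I would rewrite the Frobenius objective columnwise: with $r_i$ the $i$-th column of $R$ and $c_i$ the $i$-th entry of $c$,
\[
\|bc^T-R\|_F^2=\sum_{i=1}^{d}\|c_i b-r_i\|_2^2.
\]
Since $bc^T$ is unchanged under $b\mapsto\lambda b$, $c\mapsto\lambda^{-1}c$, I may assume $\|b\|_2=1$ (the degenerate case $b=0$ gives objective $\|R\|_F^2\ge\sigma_1^2$, which is not better than $\bar R$ and so can be set aside). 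For a fixed unit vector $b$, each summand is minimized by taking $c_i b$ to be the orthogonal projection of $r_i$ onto $\mathrm{span}(b)$, i.e. $c_i=b^Tr_i$, equivalently $c^T=b^TR$. This collapses \eqref{equation:optimal} to the single-variable problem $\min_{\|b\|_2=1}\|bb^TR-R\|_F^2$.

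Next I would use orthogonality: for each column, $bb^Tr_i$ and $r_i-bb^Tr_i$ are orthogonal because $bb^T$ is the orthogonal projector onto $\mathrm{span}(b)$ when $\|b\|_2=1$, so by the Pythagorean identity $\|bb^TR-R\|_F^2=\|R\|_F^2-\|bb^TR\|_F^2$. Hence minimizing the objective is equivalent to maximizing $\|bb^TR\|_F^2=\|b^TR\|_2^2$. Substituting the SVD $R=U\Sigma V^T$ and using that $V$ has orthonormal columns, $\|b^TR\|_2^2=\|b^TU\Sigma\|_2^2=\sum_{i=1}^{k}(b^Tu_i)^2\sigma_i^2$, where $u_i$ are the left singular vectors.

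Finally, since the $u_i$ are orthonormal we have $\sum_i(b^Tu_i)^2\le\|b\|_2^2=1$, and because the $\sigma_i$ are nonincreasing, $\sum_i(b^Tu_i)^2\sigma_i^2\le\sigma_1^2$, with equality attained at $b=u_1$ (where $(b^Tu_1)^2=1$). For that choice $c=\sigma_1 v_1$, so $bc^T=\sigma_1 u_1v_1^T=U\Sigma'V^T=\bar R$, which is therefore a minimizer and the minimum value equals $\|R\|_F^2-\sigma_1^2=\sum_{i\ge 2}\sigma_i^2=\|\bar R-R\|_F^2$, proving \eqref{equation:optimal}. The only points requiring care are the two reduction steps --- the scale normalization of $b$ and the identification of the optimal $c$ with a projection --- together with the observation that $bb^T$ is a genuine orthogonal projector; once these are in place the remaining weighted-sum bound is routine, so I do not expect a substantive analytic obstacle, only bookkeeping (e.g. the case of repeated top singular value, where $b=u_1$ remains a valid, though non-unique, optimizer).
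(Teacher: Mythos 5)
Your proposal is correct and follows essentially the same route as the paper's proof: the columnwise rewriting of the Frobenius objective, the reduction to $c^T=b^TR$ via orthogonal projection onto a unit vector $b$, the Pythagorean identity turning minimization into maximization of $\|b^TR\|_2^2=\sum_i(b^Tu_i)^2\sigma_i^2$, and the conclusion $b=u_1$, $c=\sigma_1v_1$. Your version is in fact slightly more careful than the paper's (you justify the normalization $\|b\|_2=1$ via scale invariance, dispose of the degenerate case $b=0$, and correctly use $\sum_i(b^Tu_i)^2\le 1$ rather than equality), but these are refinements of the same argument rather than a different approach.
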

	
	\begin{proof}
		The objective in (\ref{equation:optimal}) can be written as following:
		
		\begin{equation}
			\label{equation:optimal_sigma}
			||bc^T-R||^2_F = \sum_{i=1}^{d}||c_i\cdot b-r_i||^2_2. 
		\end{equation}
		
		\noindent To minimize $||c_i\cdot b-r_i||^2_2$ while $b$ is a fixed unit vector, $c_i\cdot b$ should be the projection of $r_i$ onto $b$ ($r_i$ is the $i^{th}$ column of $R$). It is equivalent to $c=b^TR$. Then we reduce the optimization problem (\ref{equation:optimal}) as:
		
		\begin{equation}
			\label{equation:reduced}
			\min_{\scriptscriptstyle b\in\mathcal{R}^{N},||b||_2=1}||bb^TR-R||^2_F.
		\end{equation}
		
		\noindent Since $b$ is a unit vector and  V are orthonormal, we can rewrite $||bb^TR||^2_F$ as:
		
		\begin{equation}
			\label{equation:svd_bbt}
			\begin{aligned}
				||bb^TR||^2_F&=||b^TR||^2_2=||b^TU\Sigma V^T||^2_2\\
				&=||b^TU\Sigma||^2_2=\sum_{i=1}^k(b^Tu_i)^2\sigma_i^2.
			\end{aligned}	
		\end{equation}
		
		\noindent By Pythagorean Theorem, and $||R||^2_F\geq  {\textstyle \sum_{i=1}^{n}} ||c_ib||_2$, optimization problem (\ref{equation:reduced}) is equivalent to maximizing (\ref{equation:svd_bbt})). Since $\sum_{i=1}^k(b^Tu_i)^2=1$ and $\{\sigma_i\}$ are descending, Equation (\ref{equation:svd_bbt}) is maximized when $(b^Tu_1)^2=1$. It can be accomplished by setting $b=u_1$ and $c=\sigma_1v_1^T$, i.e., $\bar{R} = U\Sigma'V^T = bc^T$ is the optimal rank 1 approximation for $R$. 
	\end{proof}

	\subsection{Additional experiments}

	\subsubsection{Evaluation of Light-Net}
	
	To evaluate the performance of Light-Net, we randomly sampled a testing set of 200 images from LIME~\cite{meka2018lime}. It is a synthetic dataset of Bigbird~\cite{singh2014bigbird} and ShapeNet~\cite{chang2015shapenet} objects with ground truth normal, albedo, and shading. 
	We use lighting coefficients predicted by Light-Net to render shading with ground truth normal maps. By comparing the rendered shading and ground truths, we can evaluate the accuracy of the estimated lighting coefficients. 
	For quantitative evaluation, we adopt three metrics, including MSE, scale-invariant MSE, and SSIM. The results are in Table~\ref{table:lightnet}. We compare to two ablations from Table~\ref{table:MIT}-\ref{table:janner}, which are ``loss+'' and ``without joint training''. We can see that for lighting evaluation, our final model produces the lowest MSE and scale-invariant MSE, and comparable SSIM to ``without joint training''. From visual examples in Figure~\ref{fig:lighteval}, our model renders similar shading with ground truths, while the predicted lighting is more directional than ground truths. 
	
	Although part of the LIME dataset is used in the pretraining of Normal-Net, here we only use Light-Net for this evaluation, for which the dataset is completely unseen.

	\begin{figure}
		\centering
		\includegraphics[width=1\linewidth]{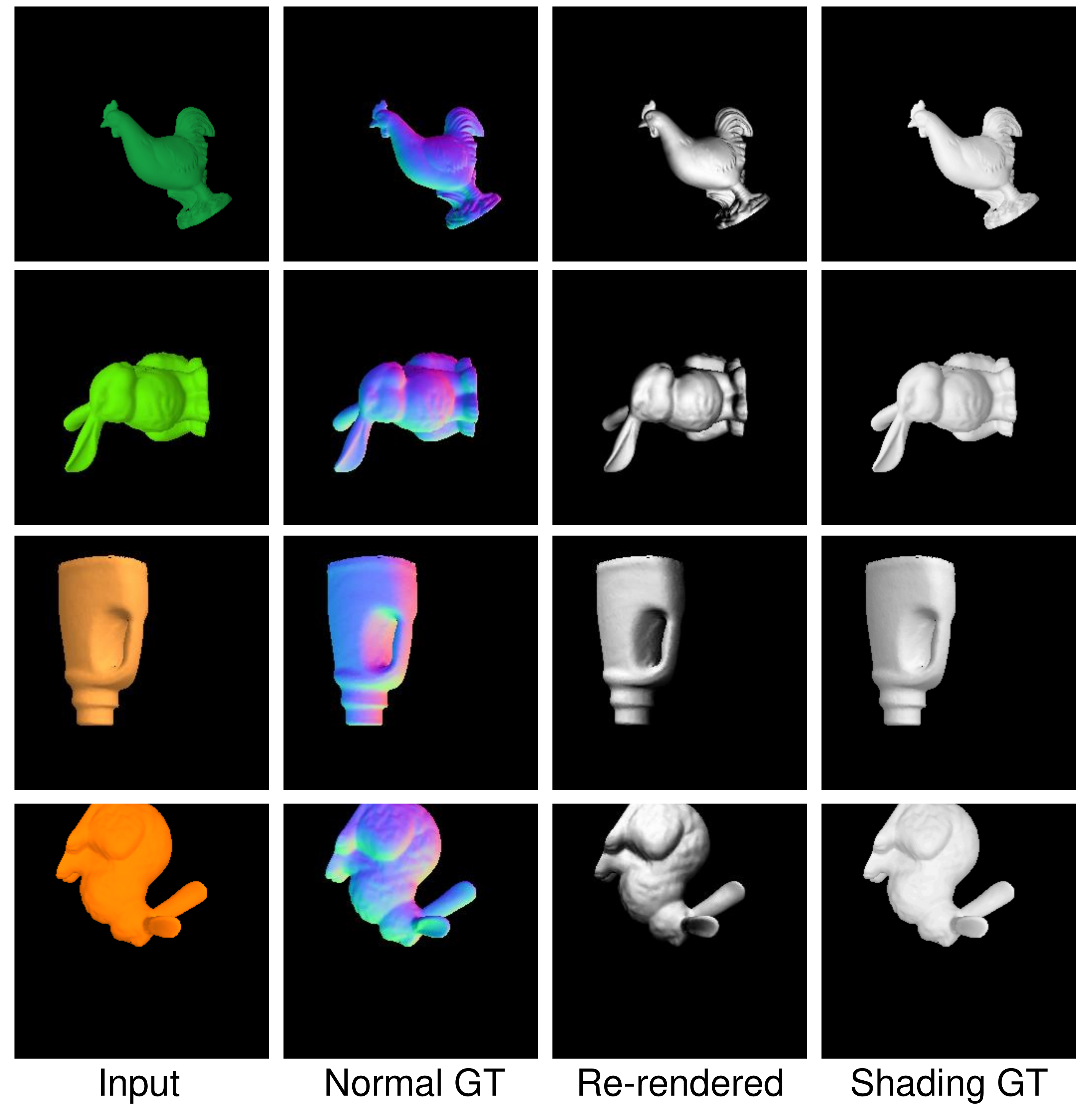}\
		\caption{Visual examples of Light-Net evaluation. We render shading from ground truth normal and predicted lighting, and produce close results with ground truth shading. }
		\label{fig:lighteval}
	\end{figure}

	\begin{table}[h]
		\caption{Quantitative evaluation of Light-Net. 
		}
		\centering 
		\scalebox{0.8}{
			\begin{tabular}{c |c c c } 
				\hline 
				&The final model&loss+&w/o joint training\\
				\hline
				MSE $\downarrow$&\textbf{0.0403}&0.0452&0.0414\\
				SMSE $\downarrow$&\textbf{0.0336}&0.0368&0.0345\\
				SSIM $\uparrow$&0.8684&0.8652&\textbf{0.8686}\\
				\hline
			\end{tabular}
		}
		\label{table:lightnet} 
	\end{table}
	
	\subsubsection{Evaluation of Spec-Net}
	To evaluate the performance of specular highlight extraction of Spec-Net, we compare with several prior methods on a real-image dataset from \cite{yi2020leveraging}. As shown in Table~\ref{table:specnet}, Spec-Net outperforms other methods in both SMSE and DSSIM. Visual comparisons are in Figure~\ref{fig:specnet}. On real images where highlights are strong, and highlight regions are saturated, most methods tend to over-extract specular highlights, while the Spec-Net performs well due to the training on a large scale of real images. 
	
	\begin{table}
		\caption{Quantitative evaluation of specular highlight separation on real images. 
		}
		\centering 
		\scalebox{1}{
			\begin{tabular}{c |c c c c} 
				\hline 
				&\cite{shen2013real}&\cite{shi2017learning}&\cite{yamamoto2019general}&Ours\\
				\hline
				MSE&0.0334&0.0305&0.0334&\textbf{0.0148}\\
				DSSIM&0.1745&0.2087&0.1743&\textbf{0.1500}\\
				\hline
			\end{tabular}
		}
		\label{table:specnet} 
	\end{table}

	\begin{figure}
		\centering
		\includegraphics[width=1\linewidth]{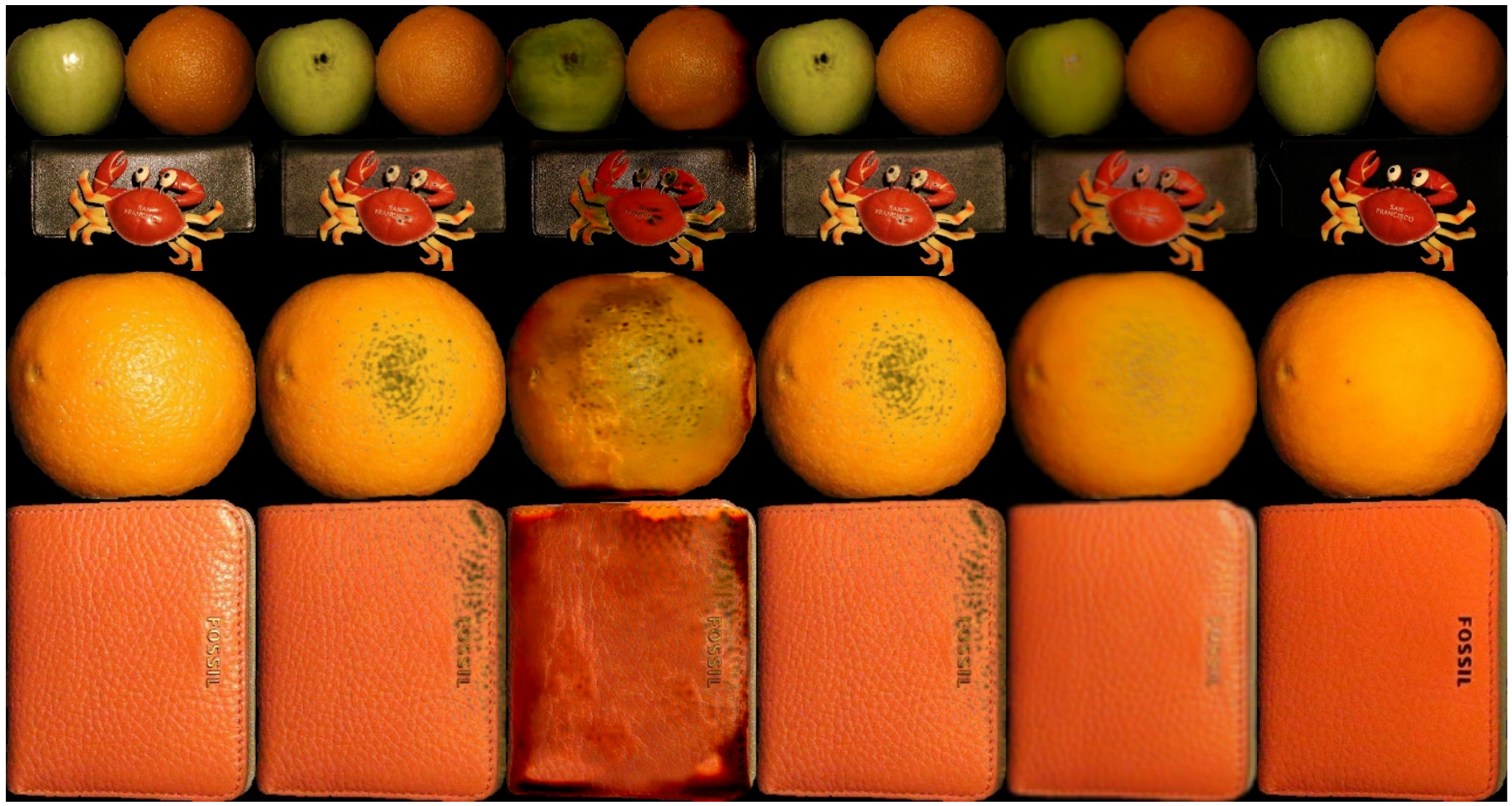}\
		\caption{Qualitative comparisons on four data from real-image specularity separation dataset from \cite{yi2020leveraging}, captured by cross-polarization. From left to right, there are input images and diffuse components after removing specular highlights by \cite{shi2017learning,shen2013real,yamamoto2019general}, ours, and ground truths.  }
		\label{fig:specnet}
	\end{figure}
	
	\subsubsection{Additional results of experiments in the main paper}
	
	We show additional results for experiments in the main paper. In Figure~\ref{fig:mit}, there are visual comparisons of two data from MIT intrinsics~\cite{grosse2009ground}. Here all methods are not fine-tuned on MIT dataset. Here SIRFS~\cite{barron2013intrinsic} and DI~\cite{narihira2015direct} are supervised methods. Yi~\cite{yi2020leveraging} and ours are self-supervised, while they predict shading by a Shading-Net, and our shading is rendered from predicted normal and lighting. In Figure~\ref{fig:normal}, there are visual comparisons of normal estimation to several state-of-the-art methods, for the quantitative evaluation in Table \ref{table:janner}, on unseen data from Janner et al.~\cite{janner2017self}. Our method produces more details in normal maps. In Figure~\ref{fig:insertion}, we compare with a full relighting pipeline RelightingNet~\cite{yu2020self} on real object insertion. 
	
	\begin{figure*}
		\centering
		\includegraphics[width=1\linewidth]{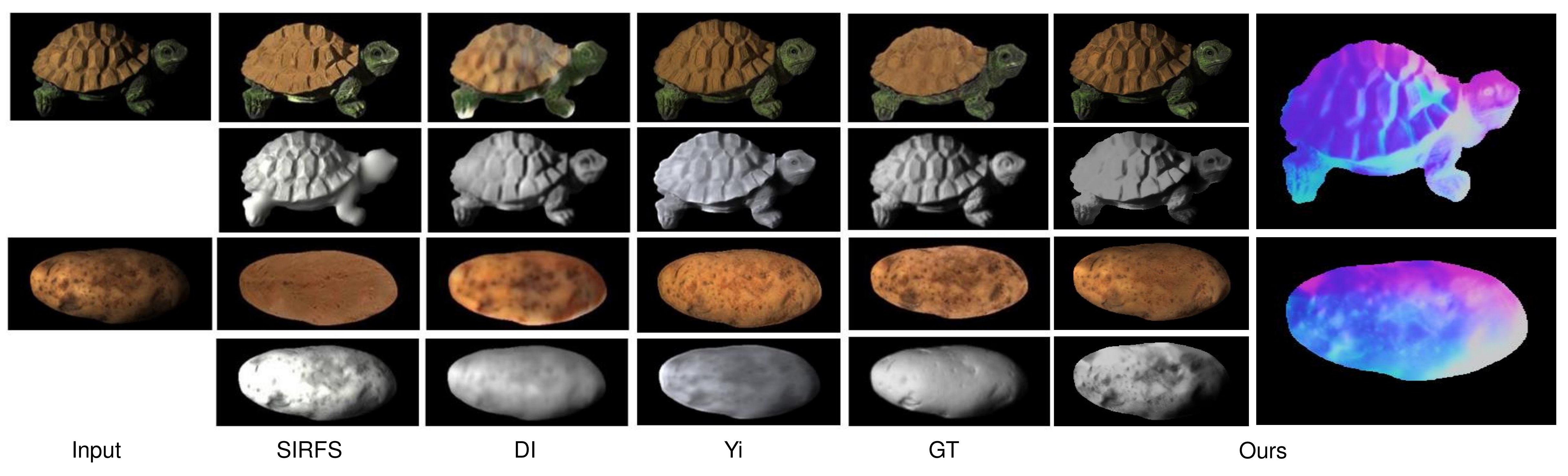}\
		\caption{Qualitative comparisons on two data from MIT Intrinsics. Odd rows are input images, albedos and even rows are shadings. The normal predicted by our method is shown at right.  }
		\label{fig:mit}
	\end{figure*}
	
	\begin{figure}
		\centering
		\includegraphics[width=1\linewidth]{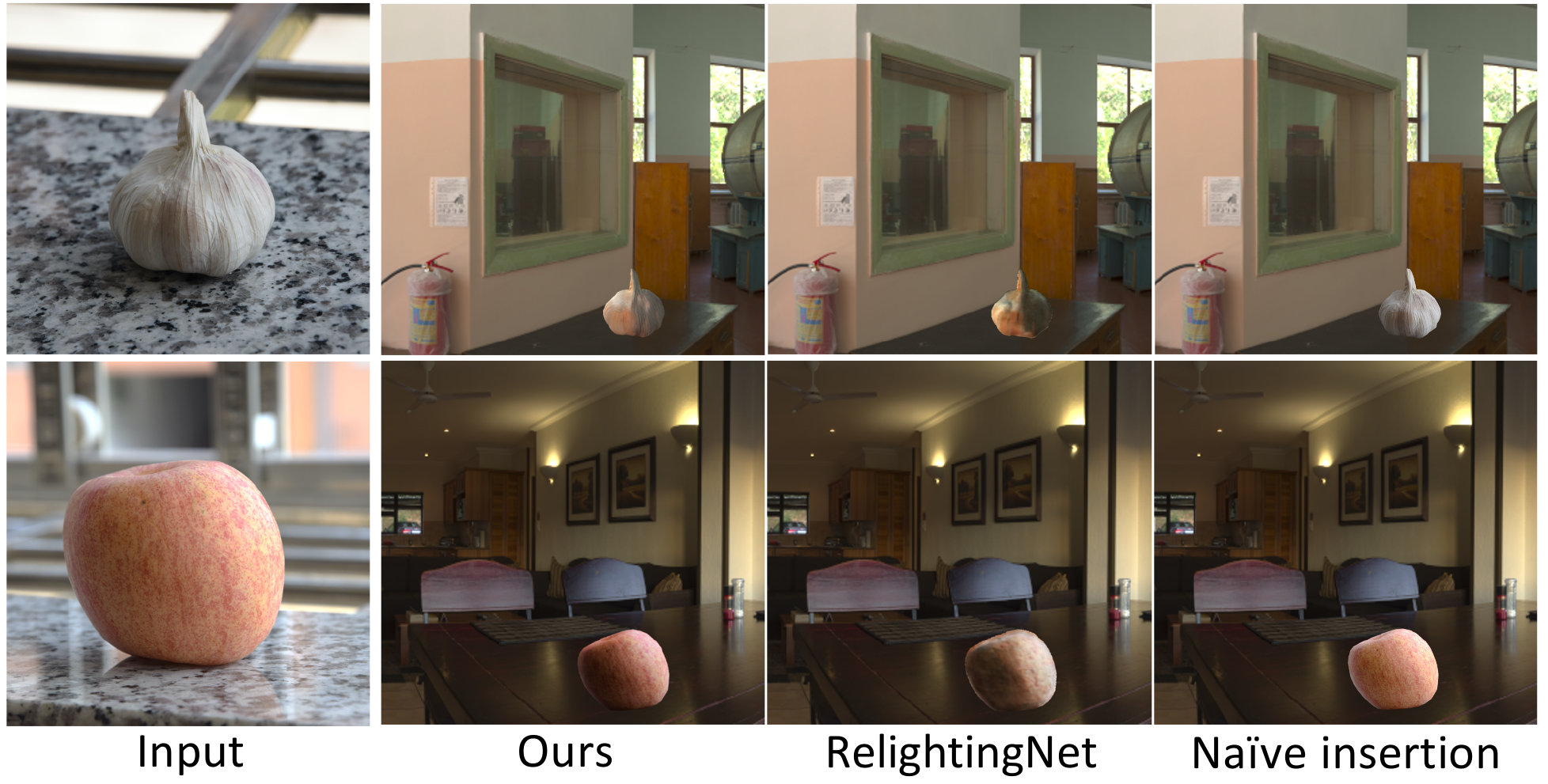}\
		\caption{Qualitative comparisons on object insertion by ours, RelightingNet~\cite{yu2020self} and naive insertion without relighting. }
		\label{fig:insertion}
	\end{figure}
	
	\begin{figure*}
		\centering
		\includegraphics[width=1\linewidth]{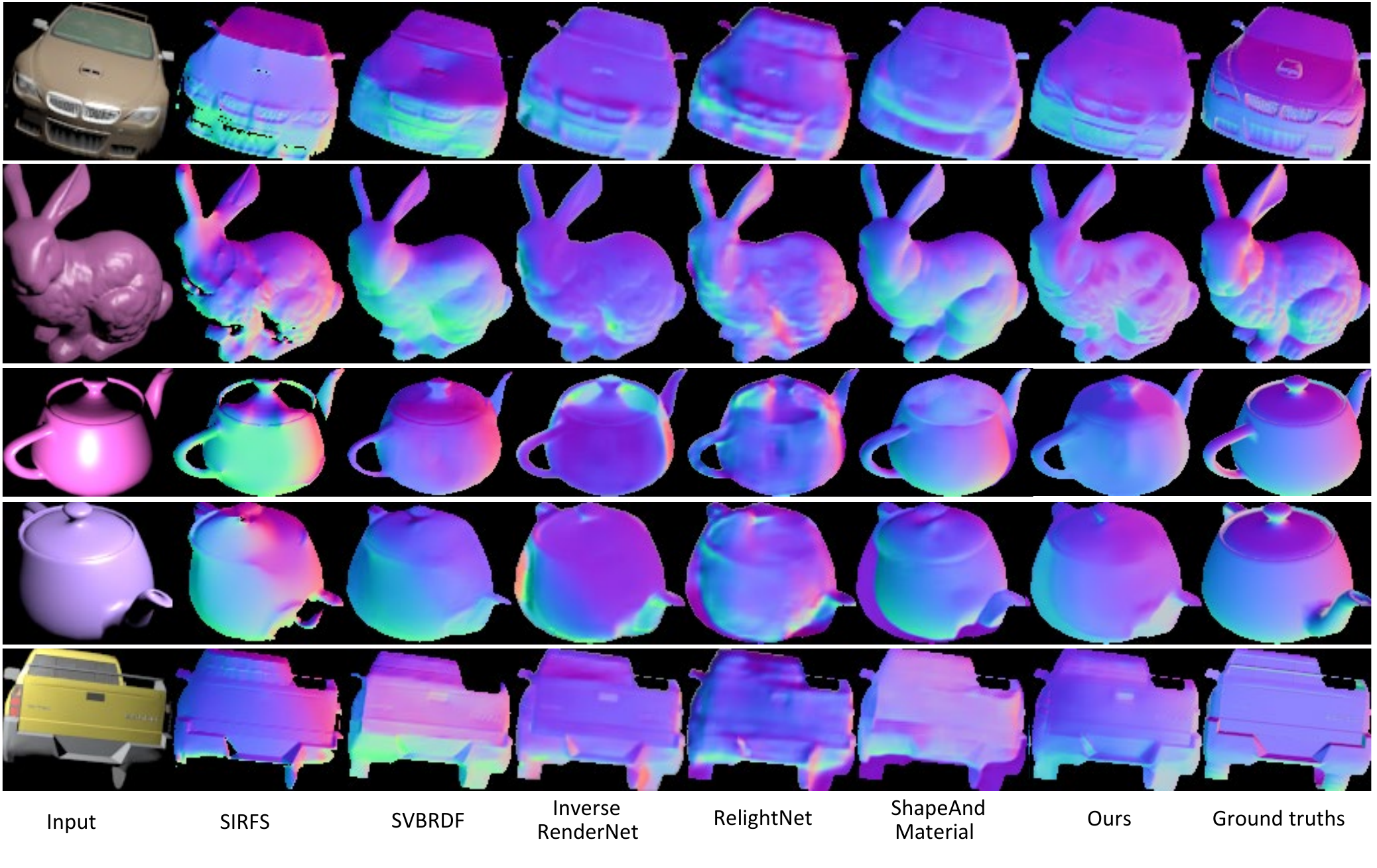}\
		\caption{Normal estimation comparisons with SIRFS\cite{barron2013intrinsic}, SVBRDF\cite{li2018learning2}, InverseRenderNet\cite{yu2019inverserendernet}, RelightNet\cite{yu2020self}and ShapeAndMaterial\cite{Lichy_2021_CVPR} on selected data from Janner et al.~\cite{janner2017self}. The reference color map can be found in Figure~\ref{fig:endtoend}, where the red channel is x-axis pointing right, green channel corresponds to y-channel pointing down, and the blue channel is z-axis pointing out from the image plane. }
		\label{fig:normal}
	\end{figure*}
	
	
	In Figure~\ref{fig:relight}, we provide supplementary results of Table~\ref{table:relighting} and Figure ~\ref{fig:relighting}. Our diffuse and non-Lambertian rendering layers produce similar results with GT renderers. GT renderers are implemented by Monte-Carlo sampling of point lights following the Blinn-Phong model.

	\subsubsection{More discussions}
	
	\noindent{\bf{Multi-view stereo as normal supervision.}} Previous method \cite{yu2019inverserendernet} uses multi-view stereo to reconstruct normal maps on outdoor building images in MegaDepth dataset\cite{li2018megadepth}, where ground truth depth maps are also available. Features on outdoor buildings are rich, which are suitable for multi-view stereo to reconstruction. 
	
	For object images, we explored similar approaches and found it not working for our scenarios. we use a reconstruction pipeline of adopting VisualSFM\cite{wu2011visualsfm} to reconstruct sparse point clouds, then PMVS2\cite{furukawa2010accurate} to further reconstruct dense point clouds. Applying the pipeline needs multi-view images as inputs, which would introduce a heavy workload for capturing multi-view images for all objects. For demonstration, we capture additional multi-view images and test the pipeline on several objects. For each object, we capture about 50 multi-view images as inputs. From the results, we find the point clouds are very sparse due to lack of features. A example is shown in Figure~\ref{fig:sfm}, textureless regions are quite common on natural objects, where the features are sparse, and reconstruction results have many holes on the resulting dense point clouds. For some other object, due to the lack of features, VisualSFM even fails to reconstruct a initial point cloud. Thus, adopting SFM and MVS to reconstruct geometry is not an option for our cases.

	\begin{figure}
		\centering
		\includegraphics[width=1\linewidth]{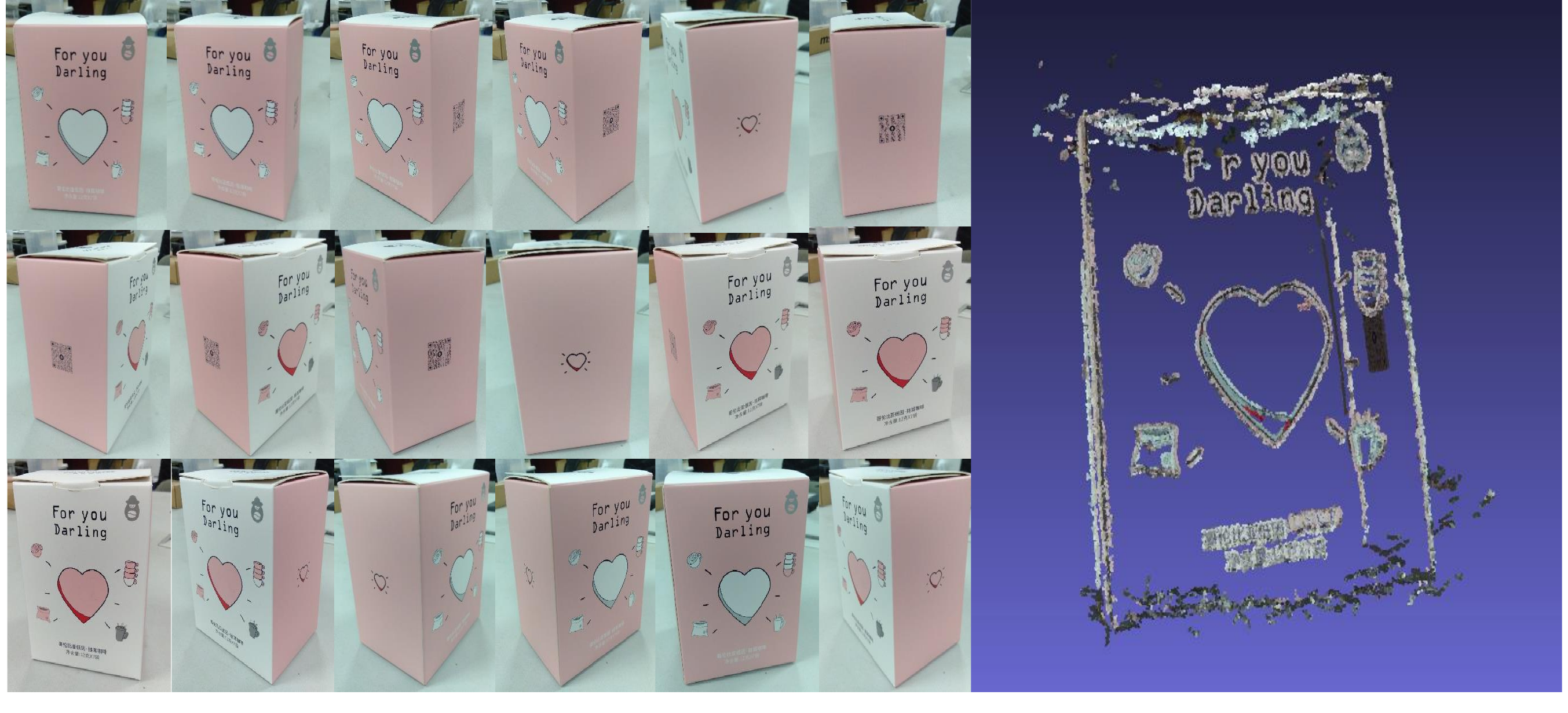}\
		\caption{Object reconstructed by VisualSFM and PMVS2. Selected multi-view image inputs are shown on the left and reconstructed dense point clouds are on the right. }
		\label{fig:sfm}
		\vspace{-0.3cm}
	\end{figure}
	
	\begin{figure}
		\centering
		\includegraphics[width=1\linewidth]{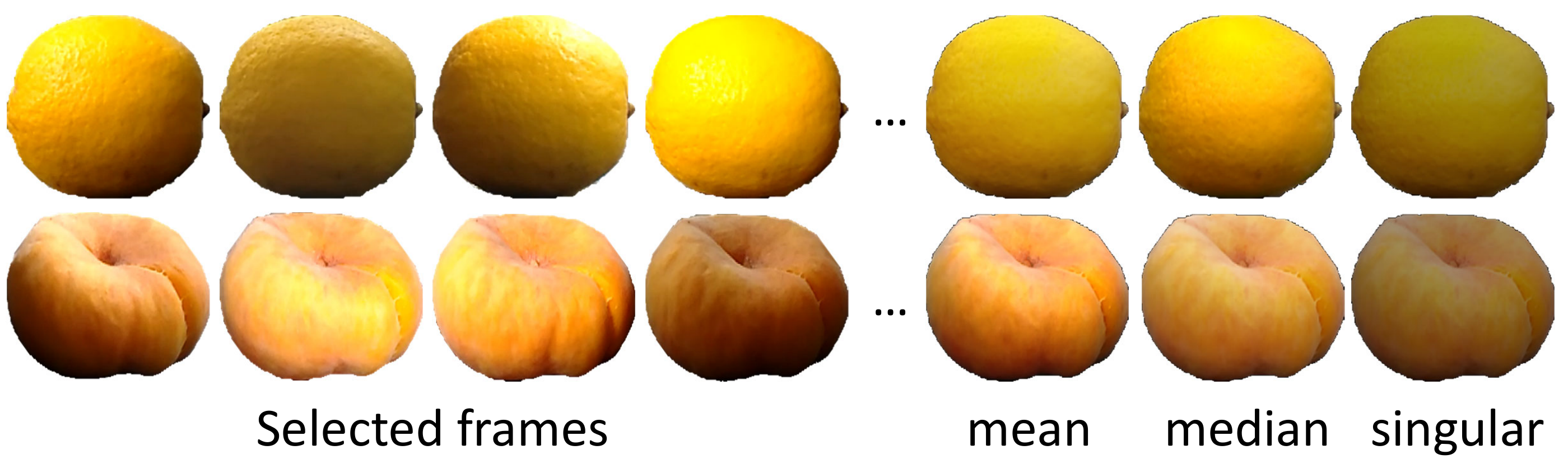}\
		\caption{On each row, selected images from one batch are shown at the left. Corresponding mean image, median image and singular image are at the right.  }
		\label{fig:meanimage}
	\end{figure}

	\noindent{\bf{Using median or mean reflectance vs. the singular reflectance.}} One may wonder whether using median or mean images of reflectance predictions in one batch will have similar results with our low-rank constraint. Firstly, losses between the median or mean reflectance of one batch and predicted reflectance are not scale-invariant. Secondly, the median image is not differentiable. Thirdly, we perform a large amount of testing on our Relit dataset and found that singular reflectance is more robust to shadows, intensity saturations and uneven lighting, which are common cases in natural images. Some visual comparisons are shown in Figure~\ref{fig:meanimage}, we can see that mean image may generate incorrect reflectance in some regions due to the above reasons while dominant singular reflectance generates much more reasonable reflectance maps. It is because SVD solves the dominant direction of reflectance maps, better than naive averaging. Note that we show cases on input images in Figure~\ref{fig:meanimage} because at the beginning of joint training, the network initializes from predicting reflectance the same as input images. We can see that using singular reflectance is much better visually, with convergence proven. \\
	
	\noindent{\bf{Comparisons to other low-rank losses. }}
	As mentioned in Section~\ref{exp:inv}, our definition of low-rank constraint is more robust and easy to converge. We evaluate the robustness of our low-rank loss with losses from \cite{yi2018faces} and \cite{yi2020leveraging}. Previous low-rank losses have more than one local optima as mentioned in \cite{yi2020leveraging}. Thus they have to use a pretraining phase to initialize the training, and the learning rates are hand-picked to make sure the final models converge to the local optima near the pretraining results. In Table~\ref{table:lowrankloss}, we found the learning rate has to be tuned carefully. For loss$^+$ in the table, a learning rate smaller than $10^{-8}$ would work. For loss*, we test learning rates from $10^{-2}$ to $10^{-8}$, and all cases degenerate to predict all-white or all-zero shadings. Setting a small learning rate also makes the training time much longer. Our loss has only one global and local optima, and it is promised to converge, and it does not suffer from degenerating. 
	
	Visual comparisons to previous low-rank losses (loss+ and loss*) from \cite{yi2018faces,yi2020leveraging} are in Figure~\ref{fig:losses}. We can see that loss+ gives similar results to ours, while albedo by our method is more smooth in color, and our normal is more accurate from Table~\ref{table:janner}. Note that here loss+ is trained in a small learning rate of $10^{-8}$ to prevent degeneration. It also benefits from our large-scale Relit dataset. However, even by a small learning rate of $10^{-8}$, loss* still degenerates and starts to predict all black albedo maps, as in Figure~\ref{fig:losses}. 
	
	\begin{figure*}
		\centering
		\includegraphics[width=1\linewidth]{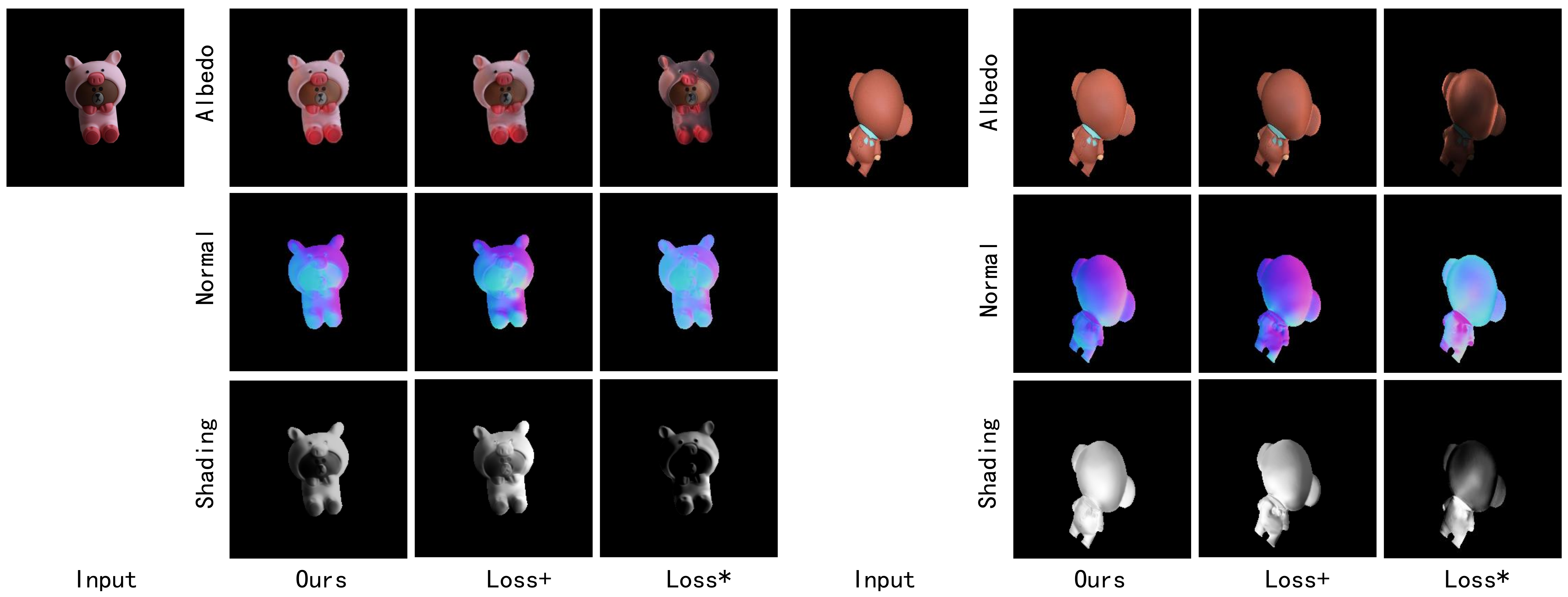}\
		\caption{Visual comparison of three low-rank losses on two unseen images. Benefiting from the Relit dataset, loss* produces similar results while setting a small learning rate. }
		\label{fig:losses}
	\end{figure*}

	\begin{figure*}
		\centering
		\includegraphics[width=1\linewidth]{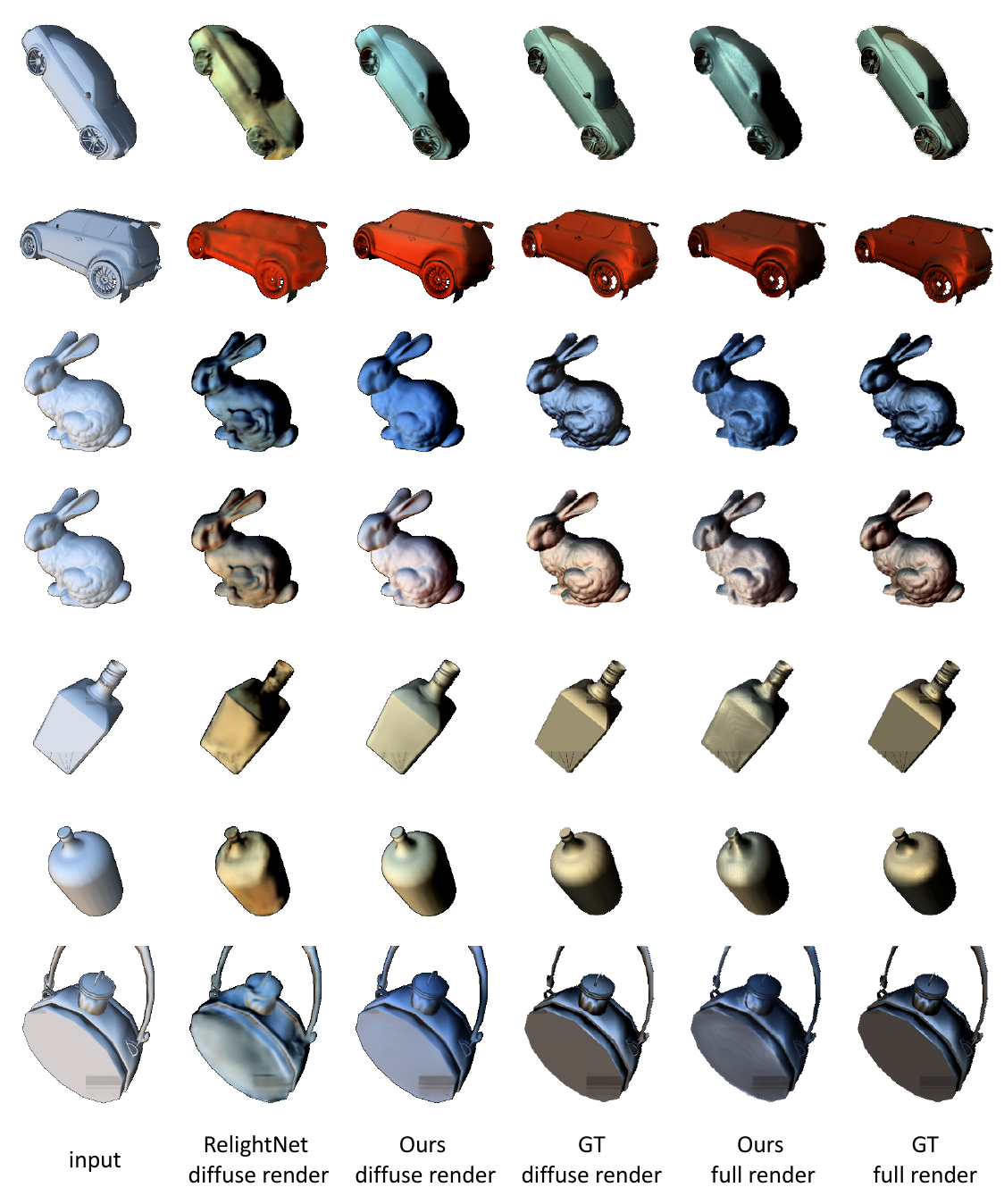}\
		\caption{Quantitative evaluation on relighting with and without specularity. RelightNet~\cite{yu2020self} can only provide diffuse relighting. Baseline* denotes the images under the original lighting. }
		\label{fig:relight}
	\end{figure*}
	\begin{table}[t]
		\centering 
		\scalebox{0.9}{
			\begin{tabular}{c| c c c c} 
				\hline 
				&$10^{-2}$&$10^{-4}$&$10^{-6}$&$10^{-8}$\\
				\hline
				loss$^+$ ($\sigma_2$)&\XSolidBrush&\XSolidBrush&\XSolidBrush&\Checkmark\\
				
				loss* ($\sigma_2/\sigma_1$)&\XSolidBrush&\XSolidBrush&\XSolidBrush&\XSolidBrush\\
				Ours&\Checkmark&\Checkmark&\Checkmark&\Checkmark\\
				\hline
			\end{tabular}
		}
		\caption{The robustness of different loss formulations. \XSolidBrush means the training degenerates to an invalid shading and \Checkmark means the training is converging.  }\label{table:lowrankloss} 
	\end{table}
	\subsection{Limitations}
	There are several limitations, as well as future directions of the proposed method. 
	One limitation is that, cast shadows (visibility) are not considered, which can further narrow the gap between relighting results and reality. 
	Furthermore, parametric models such as Blinn-Phong and Phong are difficult to model semitransparent and transparent materials, which are also common in real scenarios. Spherical harmonics are also limited to model high-frequency lighting components. We plan to explore these directions in the future. 
	
	\subsection{Relighting demos}
	
	On the project page \footnote{\href{https://renjiaoyi.github.io/relighting/}{https://renjiaoyi.github.io/relighting/}}, we include many relighting videos under changing backgrounds. Relit images are inserted to target scenes to show a seamless AR object insertion effect. We demonstrate single-object insertion and multi-object insertion where multiple objects are from different input images. We also demonstrate editing the materials of objects. Object insertion is quite popular in AR applications, and most AR Apps simply adopt naive insertion without relighting, such as the dancing hotdog in SnapChat, and furniture in Ikea Place. From the video, we can see our method generates much better object insertion results than naive insertion, demonstrating the importance of this problem. 
	
	Note that the backgrounds are cropped from HDR lighting panoramas, after Gamma corrections with $\gamma$ as $2.2$. Codes for pre-computation of Spherical Harmonic coefficients, and end-to-end inverse rendering and relighting will be released on the project page. 
	
	\subsection{App implementation}
	
	To implement the object relighting app in the Android mobile system, we convert the network models to Pytorch Mobile and package them inside the application as assets. For object photos captured from the camera, an on-device GrabCut in OpenCV is applied to obtain the object mask. To ensure acceptable automatic segmentation results, we require users to capture the objects under a background of solid colors. For photos loading from memory, the object mask is required as an additional input. We can insert and relight single or multiple objects from different photos into the same scene, and manipulate the layouts and sizes through simple dragging, tailored for amateur users. 
	
	The application is implemented in Java, using the Android Gradle plugin of version 3.5.0 with several additional Gradle and Pytorch dependencies. The app demo video is also on the project page. 
	
	\subsection{The Relit dataset}\label{sec:dataset}
	
	To capture foreground-aligned videos of objects under changing illuminations, we design an automatic device for data capture, as shown in Figure~\ref{fig:dataset} (left). The main part is an electric turntable painted black to avoid strong reflections. 
	While capturing data, objects and the camera are fixed on the turntable. The turntable rotates at a uniform angular velocity of $12.6$ rad/s, controlled by a remote to avoid shaking. For each video, the device is rotated by $360^{\circ}$ for 50 seconds. 
	
	The device is chargeable and portable, enabling us to capture data under arbitrary scenes easily. The target object stays static in the image coordinate system in captured videos, with changing illuminations and backgrounds. These foreground-aligned videos can facilitate many tasks, such as image relighting, segmentation, and inverse rendering. 
	
	In summary, the Relit dataset consists of 500 videos for more than 100 objects under different indoor and outdoor lighting. Each video is 50 seconds, resulting in 1500 foreground-aligned frames under various lighting. In total, the Relit dataset consists of $750K$ images. 
	In pre-processing, we segment the mask for one frame of each video and apply it to all frames to remove the changing backgrounds. 
	Selected objects are shown in Figure~\ref{fig:dataset} (right) The objects cover a wide variety of shapes, materials, and textures. 
	
	Some foreground-aligned images in Relit dataset are shown in Figure~\ref{fig:dataset4}-\ref{fig:dataset7}. These are selected frames from some videos after preprocessing. Sample videos from the dataset are shown on the project page, where the device is very stable, making sure the foreground objects are staying well-aligned among all frames. The dataset is released on the project page.  
	
	\subsection{Network structure and training details}
	
	Normal-Net and Light-Net are the only two learnable modules in our diffuse pipeline, and an optional specular branch may be used depending on the materials of target objects. The structures are in Figure~\ref{fig:structure}. Spec-Net shares the same structure with \cite{yi2020leveraging}. The network to regress specular reflectance $S_p$ and smoothness $\alpha$ shares the same structure of Light-Net, while changing the output to 4 channels (3 for specular reflectance and 1 for smoothness). 
	
	In pretraining of Normal-Net, $50K$ synthetic images from LIME~\cite{meka2018lime} are used for training. The learning rate is $10^{-4}$ without further adjustments. The training lasts for 50 epochs, by Adam optimizer. 
	
	In our joint training, we use the large-scale foreground-aligned images from Relit dataset. Light-Net is initialized from scratch and Normal-Net is initialed by the pre-trained model. The learning rate is $10^{-6}$ without further adjustments. Each round of joint training last for 3 epochs, taking 60 minutes per epoch on Tesla P40 GPU. The joint training process driven by the proposed low-rank loss converges rapidly, which takes 6 hours in total, thanks to the convergence proven in Section~\ref{sec:unsupervised}. 
	
	
	\begin{figure*}
		\centering
		\includegraphics[width=\linewidth]{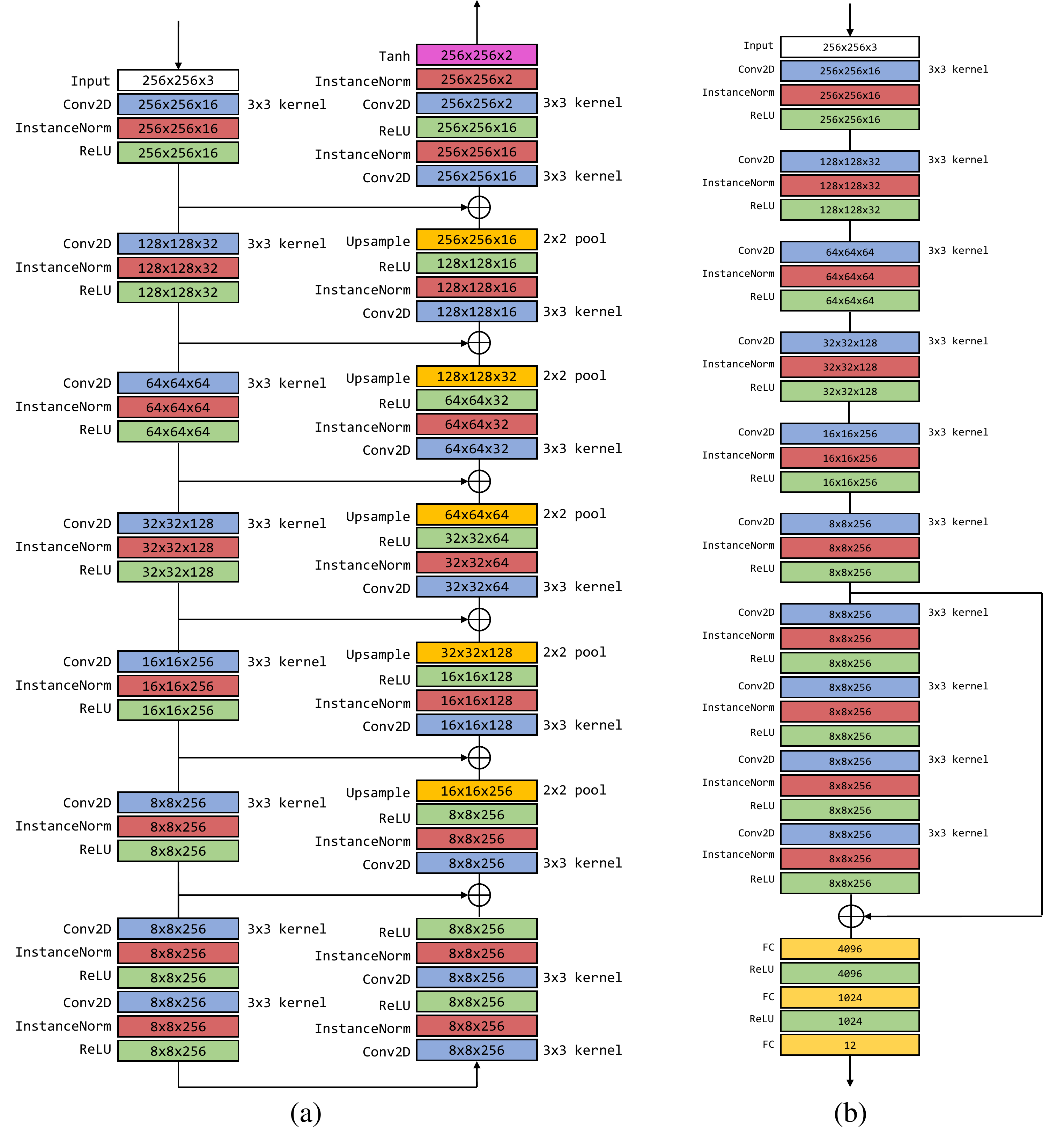}\
		\caption{(a) Structure of Normal-Net. (b) Structure of Light-Net.}
		\label{fig:structure}
	\end{figure*}
	
	

	
	\begin{figure*}
		\centering
		\includegraphics[width=0.8\linewidth]{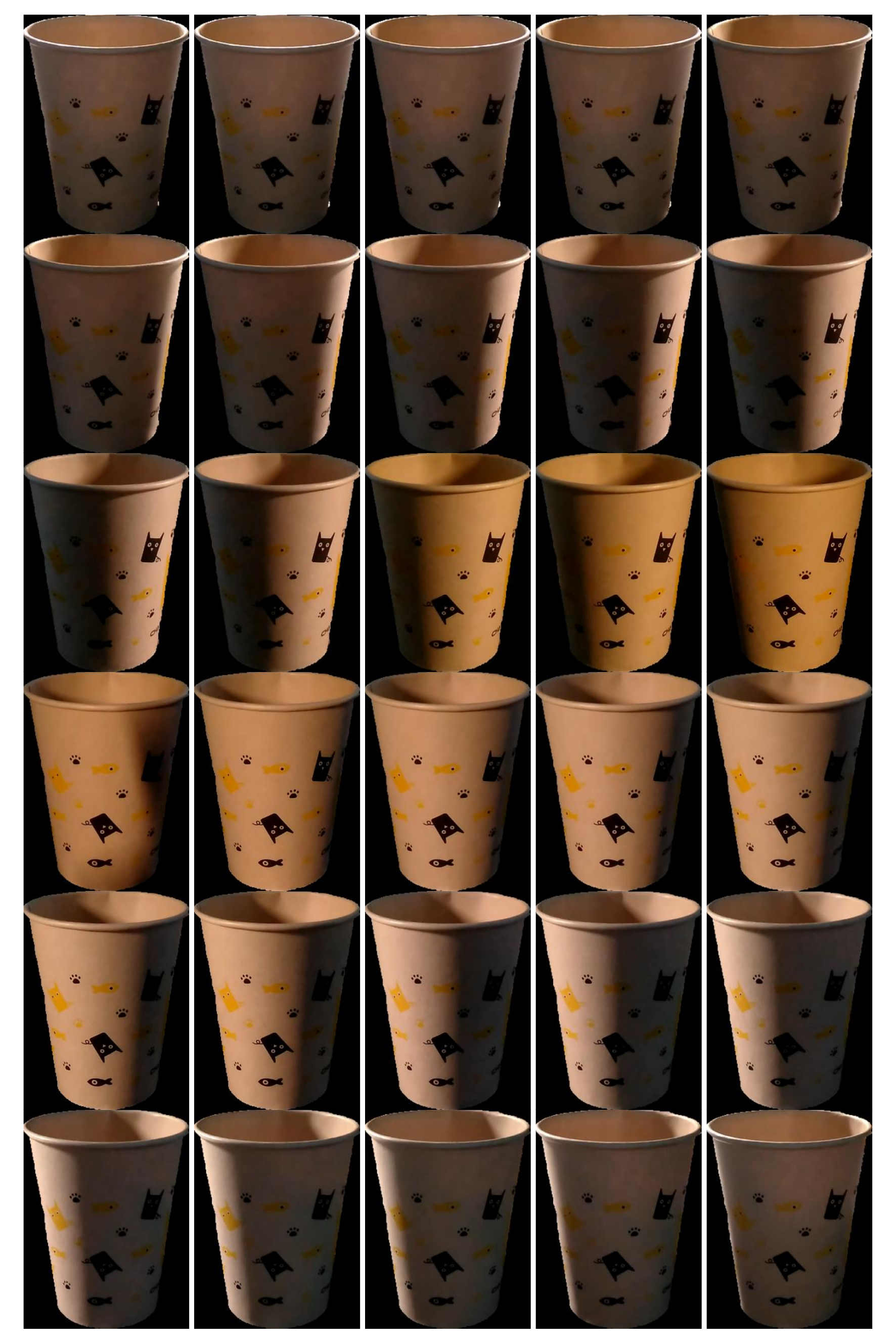}\
		\caption{Selected frames from one video in Relit dataset. }
		\label{fig:dataset4}
	\end{figure*}
	
	\begin{figure*}
		\centering
		\includegraphics[width=0.8\linewidth]{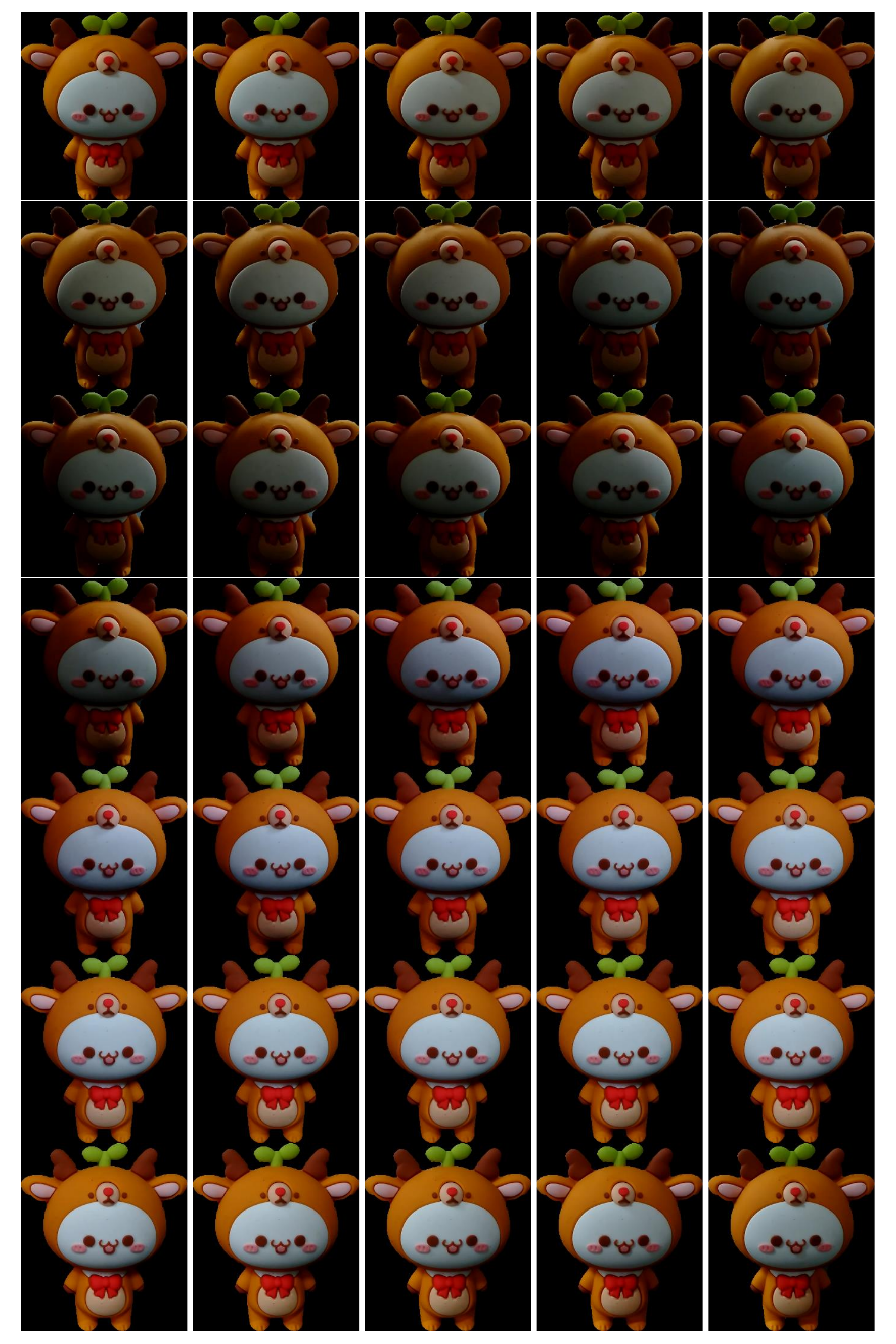}\
		\caption{Selected frames from one video in Relit dataset. }
		\label{fig:dataset5}
	\end{figure*}
	
	\begin{figure*}
		\centering
		\includegraphics[width=\linewidth]{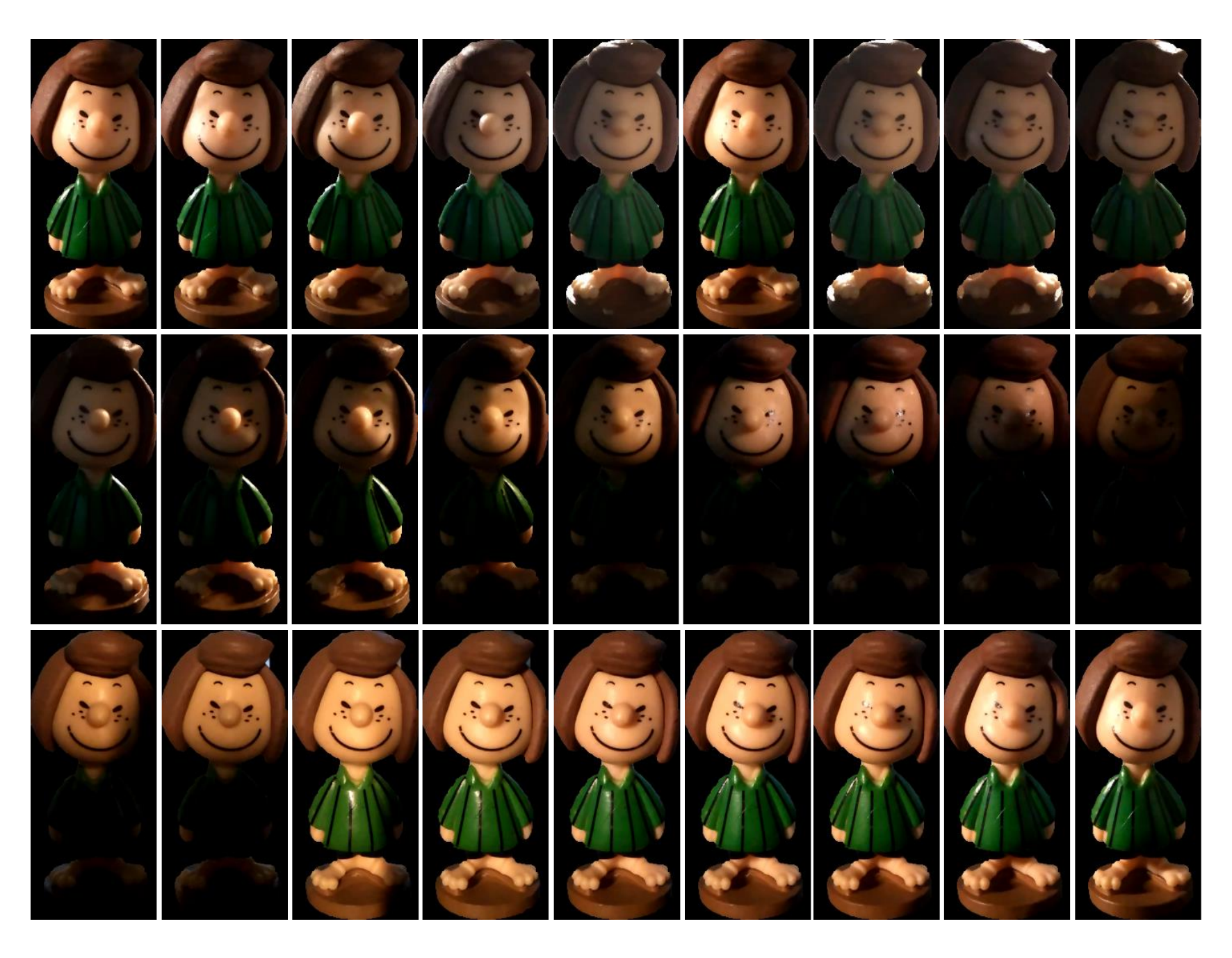}\                
		\caption{Selected frames from one video in Relit dataset. }
		\label{fig:dataset6}
	\end{figure*}
	
	\begin{figure*}
		\centering
		\includegraphics[width=\linewidth]{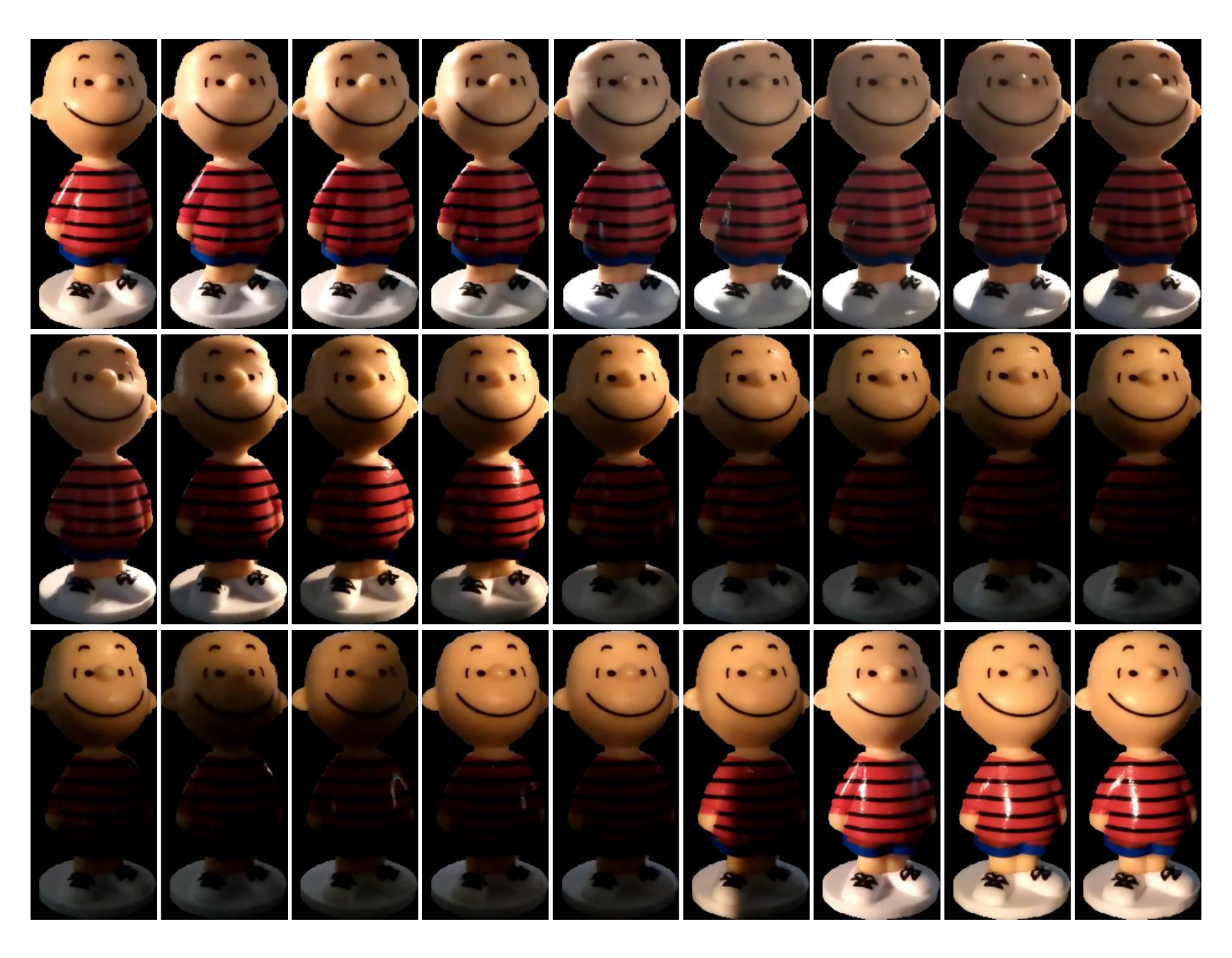}\
		\caption{Selected frames from one video in Relit dataset. }
		\label{fig:dataset7}
	\end{figure*}
	
\end{appendices}
\end{document}